
\documentclass[final,leqno]{siamltex}
\usepackage{moreverb}

\bibliographystyle{siam}
\usepackage{graphicx,subfigure,wrapfig}
\usepackage{algorithm}
\usepackage{algpseudocode}
\usepackage{color}
\usepackage{amsmath}
\usepackage{amssymb}
\usepackage{txfonts}
\usepackage{afterpage}
\usepackage[margin=1in]{geometry}
\usepackage{mathrsfs}
\usepackage{bm}
\usepackage[american]{circuitikz}
\usepackage{mathrsfs}
\usepackage{pstricks}
\usepackage{makecell}
\usepackage{threeparttable}
\usepackage{booktabs}
\newtheorem{mlemma}{Lemma}
\newtheorem{mdefinition}{Definition}
\newtheorem{mtheorem}{Theorem}
\newtheorem{mproposition}{Proposition}
\newtheorem{massumption}{Assumption}
\newtheorem{mremark}{Remark}


\def\lt{\color{black}}
\def\ls{\color{black}}

\renewcommand{\mathbf}{\boldsymbol}

\newcommand{\mb}{\mathbf}

\newcommand{\norm}[2]{\left\| #1 \right\|_{#2}}

\hyphenation{MATLAB}

\title{D3M: A deep domain decomposition method for partial differential equations}

\author{Ke Li$^{\displaystyle 1}$ \thanks{School of Information Science and Technology, ShanghaiTech University, Shanghai, 200120, China. $^{\displaystyle 1}$ Equal contributions. 
		({\tt \{like1, tangkj, liaoqf\}@shanghaitech.edu.cn}).}
	\and Kejun Tang$^{\displaystyle 1}$ \footnotemark[1] 
	\and Tianfan Wu\thanks{Viterbi School of Engineering, University of Southern California, Los Angeles, USA,
		({\tt tianfanw@usc.edu}).}  
	\and Qifeng Liao\footnotemark[1] \thanks{Corresponding author.}	
}

\begin{document}

\maketitle

\begin{abstract}
A state-of-the-art deep domain decomposition method (D3M) based on the variational principle is proposed for partial differential equations (PDEs). The solution of PDEs can be formulated as the solution of a constrained optimization problem, and we design a multi-fidelity neural network framework to solve this optimization problem. Our contribution is to develop a systematical computational procedure for the underlying problem in parallel with domain decomposition. 
Our analysis shows that the D3M approximation solution converges to the exact solution of underlying PDEs. Our proposed framework establishes a foundation to use variational deep learning  in large-scale engineering problems and designs. 
We present a general mathematical framework of D3M, validate its accuracy and demonstrate its efficiency with numerical experiments.
\end{abstract}

\begin{keywords}
Domain decomposition, Deep learning, Mesh-free, Multi-fidelity, Parallel computation, PDEs, Physics-constrained
\end{keywords}

\begin{AMS}
65M55, 68T05, 65N55
\end{AMS}

\pagestyle{myheadings}
\thispagestyle{plain}
\markboth{K. LI, K. TANG, T. WU AND Q. LIAO}{DEEP DOMAIN DECOMPOSITION}

\section{Introduction}
\label{Introduction}
Partial differential equations (PDEs) are among the most ubiquitous tools employed 
in describing computational science and engineering problems. 
When modeling complex problems, the governing PDEs are typically expensive to solve
through transitional numerical methods, e.g., the finite element methods \cite{elman2014finite}. 
While principal component analysis \cite{wold1987principal,scholkopf1997kernel} (PCA), proper orthogonal decomposition \cite{berkooz1993proper,willcox2002balanced} (POD) and reduced basis methods \cite{verpat02,quaroz07,elman2013reduced,cheroz14,liao2016reduced,chenjiang16} are classical approaches for model reduction to
reduce the computational costs, deep learning \cite{lecun2015deep} 
currently gains a lot of interests for efficiently solving PDEs. 
There are mathematical guarantees called universal approximation theorems \cite{hornik1991approximation} stating that a single layer neural network can approximate most functions in Soblev spaces. 
Although there is still a lack of theoretical frameworks for explaining the effectiveness of multilayer 
neural networks, 
deep learning has become a widely used tool. 
Marvelous successful practices of deep neural networks 
encourages their applications to different areas, 
where the curse of dimensionality is a tormenting issue.
    
New approaches are actively proposed to solve PDEs based on deep learning techniques. E et al. \cite{weinan2017proposal,weinan2018deep} connect deep learning with dynamic system and propose 
a deep Ritz method (DRM) for solving PDEs via variational methods. 
Raissi et al. \cite{raissi2017physics1,raissi2017physics2,raissi2019physics} develop physics-informed neural networks which combine observed data with PDE models. 
By leveraging a prior knowledge that the underlying PDE model obeys a specific form, they can make accurate predictions with limited data. Long et al. \cite{long2017pde} present a feed-forward neural network, called PDE-Net, to accomplish two tasks at the same time: predicting time-dependent behavior of an unknown PDE-governed dynamic system, and revealing the PDE model that generates observed data. Later, 
Sirignano et al. \cite{sirignano2018dgm} propose  a deep Galerkin method (DGM), which is a meshfree deep learning algorithm to solve PDEs without requiring observed data~(solution samples of PDEs). When a steady-state high-dimensional parametric PDE system is considered, Zhu et al. \cite{zhu2018bayesian,zhu2019physics} propose Bayesian deep convolutional encoder-decoder networks for problems 
with high-dimensional random inputs. 
    
When considering computational problems arising in practical engineering, e.g.\ aeronautics and astronautics, 
systems are typically designed by multiple groups along disciplinary. The complexity of solving large-scale problems may take an expensive cost of hardware.
The balance of accuracy and generalization is also hard to trade off. For this reason, decomposing a given system into component parts to manage the complexity is a strategy,
and  the domain decomposition method is a traditional numerical method to achieve this goal. 
Schwarz \cite{schwarz1870ueber} proposes an iterative method for solving harmonic functions. Then this method is improved by S.L.Sobolev \cite{sobolev1936schwarz}, S.G.Michlin \cite{mikhlin1951schwarz}, P.L.Lions 
et al.\ \cite{lions1988schwarz,lions1989schwarz}. 
Domain decomposition is also employed for optimal design or control \cite{antil2010domain}, for decomposing a complex design task (e.g., decomposition approaches to multidisciplinary optimization \cite{liao2019parallel,kong2018efficient}), and for uncertainty analysis of models governed by PDEs \cite{liao2015domain,chen2015local}. 

In this work, we propose a variational deep learning solver based on domain decomposition methods,
which is referred to as the deep domain decomposition method (D3M) to implement parallel computations 
along physical subdomains. 
Especially, efficient treatments of complex boundary conditions are developed.
Solving PDEs using D3M has several benefits:

	\textbf{Complexity and generalization.} 
	D3M manages complexity at the local level. 
	Overfitting is a challenging problem in deep learning. 
	The risk of overfitting can be reduced by splitting the physical domain into subdomains, 
	so that each network focuses on a specific subdomain.
	
	\textbf{Mesh-free and data-free.}
	D3M constructs and trains networks under variational formulation. 
	So, it does not require given data, which can be potentially used for complex and high-dimensional problems.
	
	\textbf{Parallel computation.}
	The computational procedures of D3M are in parallel for different subdomains. 
	This feature helps D3M work efficiently on large-scale and multidisciplinary problems.
	
	In this work, D3M is developed based on iterative domain decomposition methods. 
	The development of using domain decomposition leads to an  independent  
	model-training procedure in each subdomain  
	in an ``offline'' phase, followed by assembling global solution using pre-computed local information in an ``online'' phase. Section~\ref{odd} reviews iterative overlapping domain decomposition methods. Section~\ref{d3m} presents the normal variational principle informed neural networks, and 
	 our D3M algorithms.  
	 A convergence analysis of D3M is discussed in Section~\ref{analysis}, and a summary of our full approach is presented in Section~\ref{d3m_summary}. Numerical studies are discussed in Section \ref{tests}. Finally, Section~\ref{conclusion} concludes the paper.

\section{Overlapping domain decomposition}\label{odd}
The Schwarz method \cite{schwarz1870ueber} is the most classical example of domain decomposition approach for PDEs, and it is still efficient with variant improvements \cite{li2018multilevel,zampini2017multilevel,zhang2018bi}.

Given a classical Poisson's equation
\begin{equation}\label{poi}
\left\{
\begin{aligned}
-\Delta u &= f&, \ \ &\mathrm{in} \ \Omega,\\
u &= 0&, \ \ &\mathrm{on} \ \partial\Omega.
\end{aligned}
\right.
\end{equation}

We divide $\Omega$ into two overlapping subdomains $\Omega_i, i=1,2$ (see Figure \ref{ill2}), where
\begin{equation}
\begin{aligned}
\Omega = \Omega_1\cup\Omega_2,\ \Gamma_1 := \partial\Omega_1\cap\Omega_2, \Gamma_2 := \partial\Omega_2\cap\Omega_1,\ \Omega_{1,2}:=\Omega_1\cap\Omega_2.
\end{aligned}
\end{equation}

\begin{figure}[H]
	\centering
	\subfigure[]{
		\centering
		\includegraphics[width=0.35\linewidth]{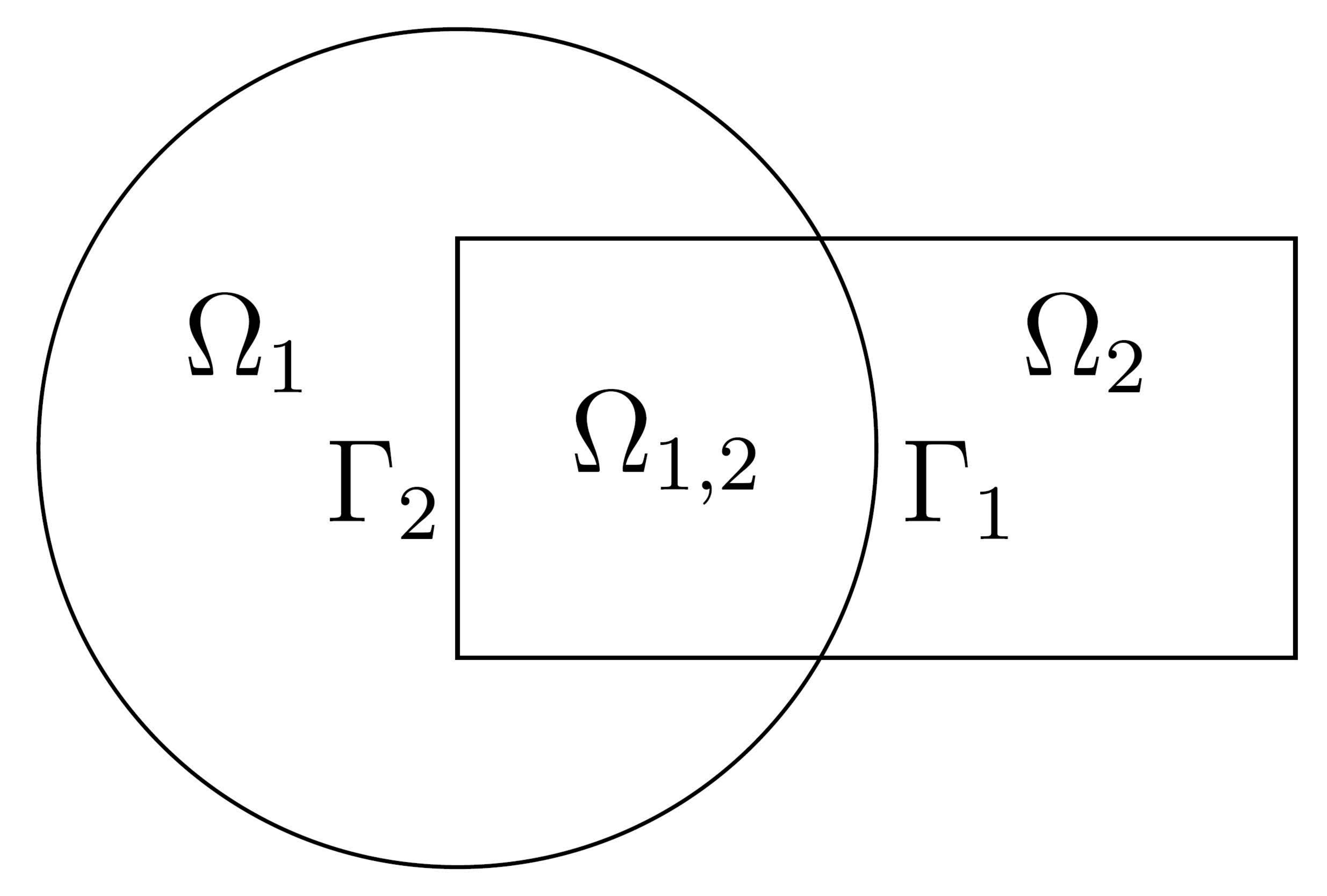}}
	\subfigure[]{
		\centering
		\includegraphics[width=0.35\linewidth]{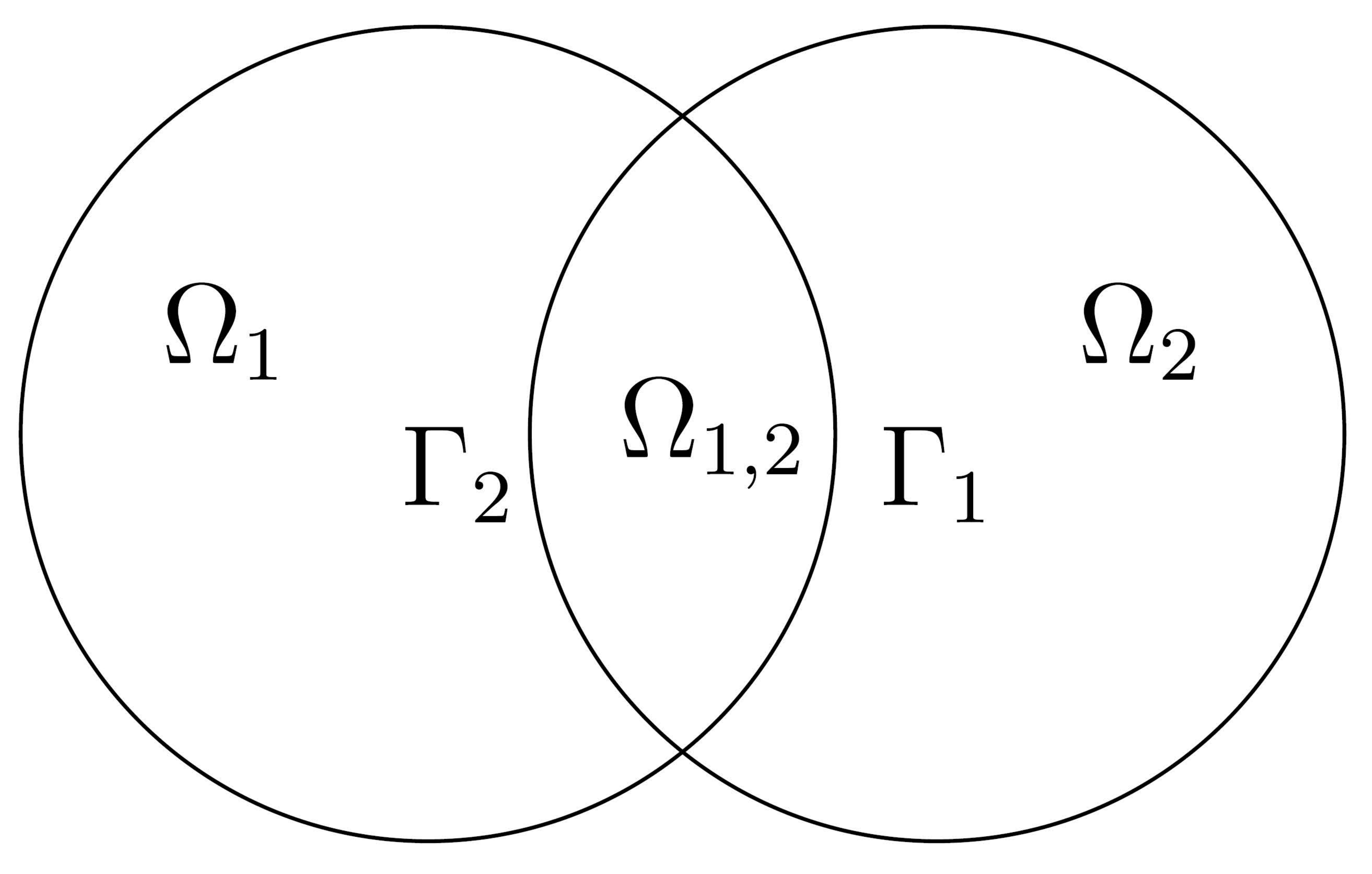}}
	\caption{Partition into two overlapping subdomains.}
	\label{ill2}
\end{figure}
We introduce the original formula named Schwarz alternating method here. Let $u^0$ be an initial guess defined in $\Omega$ and vanishing on $\partial\Omega$. For $k\geq0$, we define sequences $u_i^{k}$ where $u_i^{k}$ denotes $u^k$ in $\Omega_i$. The $u_i^{k+1}$ is determined from an iteration algorithm:
\begin{equation}\label{decom1}
\left\{
\begin{aligned}
-\Delta u_1^{k+1/2} &= f&, \ \ & \mathrm{in} \ \Omega_1,\\
u_1^{k+1/2} &= u_2^k&, \ \ & \mathrm{on} \ \Gamma_1,\\
u_1^{k+1/2} &= 0&, \ \ & \mathrm{on} \ \partial\Omega_1\cap\partial\Omega
\end{aligned}
\right.
\end{equation}
and
\begin{equation}\label{decom2}
\left\{
\begin{aligned}
-\Delta u_2^{k+1} &= f&, \ \ & \mathrm{in} \ \Omega_2,\\
u_2^{k+1} &= u_1^{k+1/2}&, \ \ & \mathrm{on} \ \Gamma_2,\\
u_2^{k+1} &= 0&, \ \ & \mathrm{on} \ \partial\Omega_2\cap\partial\Omega.
\end{aligned}
\right.
\end{equation}

\section{Deep domain decomposition method with variational principle}\label{d3m}
Before introducing D3M, we first give a brief introduction of variational principle. In this section, we consider the Poisson's equation and reformulate \eqref{poi} as a constrained minimization problem, and then we introduce the D3M algorithm. 
	
\subsection{Variational principle}
The Poisson's equation with the homogeneous Dirichlet boundary condition is \eqref{poi}, 
and we consider the situation that $f\in \mathcal{L}^2(\Omega)$ and $\Omega$ is a square domain 
in this section. The idea of the standard Deep Ritz method is based on the variational principle. 
That is, the PDE can be derived by a functional minimization problem as described in the following proposition.
\begin{mproposition}
Solving the Poisson's equation \eqref{poi}  is equivalent to an optimization problem
\begin{equation} \label{eq_variational_form}
\begin{aligned}
&\min\limits_{u} \ E(u) = \int_{\Omega} \frac{1}{2}|\nabla u|^2 dxdy - \int_{\Omega} {f \cdot u}dxdy,\\
& \ \mathrm{s.t.} \quad u = 0 \ \ \mathrm{on} \ \partial\Omega.
\end{aligned}
\end{equation} 
The Lagrangian formula of \eqref{eq_variational_form} is given by 
\begin{equation} \label{ori_L}
L(u, q) = \int_{\Omega} \frac{1}{2}|\nabla u|^2dxdy - \int_{\Omega} {u \cdot f}dxdy + q\int_{\partial\Omega} u dxdy,
\end{equation}
where $q$ is the Lagrange multiplier.
\end{mproposition}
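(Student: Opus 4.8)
The plan is to treat this as the classical equivalence between Dirichlet energy minimization and the weak (hence, under regularity, strong) form of Poisson's equation, established through the first-variation argument of the calculus of variations. I would work over the admissible set $\mathcal{A} = \{u \in H^1(\Omega) : u = 0 \text{ on } \partial\Omega\} = H_0^1(\Omega)$, on which $E$ is well defined since $f \in \mathcal{L}^2(\Omega)$. The argument splits into two directions, and I would begin with the claim that a minimizer of $E$ over $\mathcal{A}$ solves \eqref{poi}.

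First I would compute the first variation. Fixing a minimizer $u$ and an arbitrary test function $v \in H_0^1(\Omega)$, I set $\phi(\epsilon) = E(u + \epsilon v)$ and expand the quadratic,
\begin{equation*}
\phi(\epsilon) = E(u) + \epsilon\left(\int_\Omega \nabla u \cdot \nabla v\, dxdy - \int_\Omega f v\, dxdy\right) + \frac{\epsilon^2}{2}\int_\Omega |\nabla v|^2\, dxdy,
\end{equation*}
so that the stationarity condition $\phi'(0) = 0$ yields the weak form $\int_\Omega \nabla u \cdot \nabla v\, dxdy = \int_\Omega f v\, dxdy$ for every $v \in H_0^1(\Omega)$. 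Applying Green's identity and using that $v$ vanishes on $\partial\Omega$ to kill the boundary term, I obtain $\int_\Omega (-\Delta u - f)\, v\, dxdy = 0$ for all such $v$; the fundamental lemma of the calculus of variations then forces $-\Delta u = f$ in $\Omega$, while $u = 0$ on $\partial\Omega$ holds by membership in $\mathcal{A}$. This recovers \eqref{poi}.

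For the converse, and for upgrading ``critical point'' to ``global minimizer'', I would exploit convexity: since $u \mapsto \int_\Omega |\nabla u|^2$ is a convex quadratic (strictly so on $H_0^1(\Omega)$, by the Poincar\'e inequality) and the remaining term is linear, $E$ is convex on $\mathcal{A}$, so any solution of the weak form is the unique global minimizer; conversely, any classical solution of \eqref{poi} satisfies the weak form after multiplication by $v$ and integration by parts, and hence minimizes $E$. This closes the equivalence. The Lagrangian \eqref{ori_L} is then obtained by the standard recipe of adjoining the equality constraint $u = 0$ on $\partial\Omega$ to the objective $E$ through the multiplier $q$, producing the boundary term $q\int_{\partial\Omega} u\, dxdy$, so that stationarity of $L$ in $u$ and in $q$ simultaneously encodes the PDE and its boundary condition.

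The main obstacle I anticipate is not algebraic but a matter of care with function spaces and regularity: the fundamental-lemma step requires justifying that an $\mathcal{L}^2(\Omega)$ function orthogonal to all test functions vanishes, and recovering the strong form $-\Delta u = f$ presupposes enough regularity of the minimizer for $\Delta u$ to make pointwise sense (otherwise the identity must be read weakly). A secondary subtlety is the constraint handling in \eqref{ori_L}: a genuinely pointwise condition $u = 0$ on $\partial\Omega$ would in full generality call for a multiplier \emph{function} on $\partial\Omega$ rather than the single scalar $q$, so I would note that \eqref{ori_L} is the relaxed saddle-point formulation that the subsequent neural-network discretization actually targets.
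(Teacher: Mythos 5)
Your argument is correct and is the standard Dirichlet-principle proof: first variation to obtain the weak form, Green's identity plus the fundamental lemma to recover $-\Delta u = f$, and convexity (via Poincar\'e) to upgrade critical points to the unique global minimizer and to close the converse direction. The paper itself states this proposition without proof, treating it as classical background, so there is no competing argument to compare against; yours supplies exactly the justification the paper omits. One remark you make deserves emphasis rather than a footnote: with a single scalar multiplier $q$, the term $q\int_{\partial\Omega} u\, dxdy$ in \eqref{ori_L} enforces only the integral condition $\int_{\partial\Omega} u = 0$, not the pointwise constraint $u=0$ on $\partial\Omega$; a faithful Lagrangian would require a multiplier function on the boundary (or, as the paper actually implements in \eqref{eq_variational_nn}, a quadratic penalty $q\int_{\partial\Omega} u^2$, which is a penalty method rather than a Lagrangian). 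Your reading of \eqref{ori_L} as a relaxed formulation targeted by the discretization is the right way to reconcile the statement with what is actually optimized.
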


\begin{mdefinition}
    $\mathcal{H}(\mb{div})$ denotes symmetric tensor-fields in $H^1$ space,
    in which functions are square integrable and have square integrable divergence.
\end{mdefinition}

We employ a mixed residual loss \cite{zhu2019physics} following Hellinger-Reissner principle \cite{ARNOLD1990281}.
With an additional variable $\tau\in\mathcal{H}(\mb{div})$, which represents flux, we can turn Equation \eqref{poi} into
\begin{equation}\label{mixres}
\left\{
\begin{aligned}
\tau &= - \nabla u&, \ \ &\mathrm{in} \ \Omega,\\
\nabla\cdot\tau &= f&, \ \ &\mathrm{in} \ \Omega.
\end{aligned}
\right.
\end{equation}
The mixed residual loss is 
\begin{equation} \label{g_L}
L(\tau,u,q) = \int_{\Omega}[(\tau+\nabla u)^2+(\nabla\cdot \tau-f)^2]dxdy + q\int_{\partial\Omega} u dxdy.
\end{equation}

\subsection{Variational principle informed neural networks}
Though the Poisson's equation \eqref{poi} is reformulated as an optimization problem, it is intractable to find the optimum in an infinite-dimensional function space. Instead, we seek to approximate the solution $u(x,y)$ by neural networks. We utilize $\mb{N}_u(x,y;\theta_u),\mb{N}_\tau(x,y;\theta_{\tau})$ to approximate the solution $u$ and the flux $\tau$ in domain $\Omega$, where $\theta_u$ and $\theta_{\tau}$ are the parameters to train. The input is the spatial variable in $\Omega$, and the outputs represent the function value corresponding to the input. With these settings, we can train a neural network by variational principle to represent the solution of Poisson's equation. The functional minimization problem \eqref{g_L} turns into the following optimization 
problem
\begin{equation}
\label{eq_variational_nn}
\begin{aligned}
\underset{\theta =\{\theta_u, \theta_{\tau}\}} {\mathrm{min}} \ &\int_{\Omega} \left[(\mb{N}_\tau+\nabla \mb{N}_u)^2 + (\nabla\cdot\mb{N}_\tau - f)^2\right]dxdy + q\int_{\partial\Omega}\mb{N}_u^2 dxdy.
\end{aligned}  
\end{equation}

\begin{mremark}
	In practical implementation, $\mb{N}_u$ and $\mb{N}_\tau$ are embedded in one network $\mb{N}$ parameterized with $\theta$, and the two outputs of $\mb{N}$ denote the function values of $\mb{N}_u$ and $\mb{N}_{\tau}$ respectively.
\end{mremark}

Therefore, the infinite-dimensional optimization problem \eqref{eq_variational_form} is transformed into a finite-dimensional optimization problem \eqref{eq_variational_nn}. Our goal is to find the optimal (or sub optimal) parameters $\theta$ to minimize the loss in \eqref{eq_variational_nn}. To this end, we choose a mini-batch of points randomly sampled in $\Omega$. These data points can give an estimation of the integral in \eqref{eq_variational_nn} and the gradient information to update the parameters $\theta$. 
For example, a mini-batch points $\{(x_i,y_i)\}_{i=1}^{m+n}$ are drawn in $\bar{\Omega}$ randomly, where $\{(x_i,y_i)\}_{i=1}^{m}$ in $\Omega$ and $\{(x_i,y_i)\}_{i=1}^{n}$ on $\partial\Omega$. Then the parameters can be updated by using optimization approaches
\begin{equation}
\small
	\begin{aligned}
	\theta^{(k+1)} &=  \theta^{(k)} -  
	\nabla_{\theta} \frac{1}{m}\sum\limits_{i=1}^{m} [(\mb{N}_\tau^{(i)}+\nabla \mb{N}_u^{(i)})^2 + (\nabla\cdot\mb{N}_\tau^{(i)} - f)^2]-\nabla_{\theta} \frac{1}{n}\sum\limits_{j=1}^{n}(q\cdot\mb{N}_u^{(j)})^2.
	\end{aligned}
\end{equation}

\subsection{Implementation details for neural networks}
This section provides details for the architecture of our neural networks.

For giving a direct-viewing impression, we show the implementation with a plain vanilla densely connected neural network to introduce how the structure works in Figure \ref{illu_nn}. For illustration only, the network depicted consists of 2 layers with 6 neurons in each layer. The network takes input variables $x,y$ and outputs $u,\tau=[\tau_x,\tau_y]$. The number of neurons in each layer is $M$ and $\sigma$ denotes an element-wise operator
\begin{equation}
\sigma(x)=\left(\phi\left(x_{1}\right), \phi\left(x_{2}\right), \ldots, \phi\left(x_{M}\right)\right),
\end{equation}
where $\phi$ is called the activation function. There are some commonly used activation functions such as the sigmoidal function, the tanh function, the rectified linear units (ReLU) function \cite{nair2010rectified}, and 
the Leaky ReLU function \cite{maas2013rectifier}. We employ the tanh function, and the automatic differentiation is obtained by using PyTorch \cite{paszke2017automatic}. The total loss function comprises the residual loss terms $L_1$, $L_2$ and the Lagrangian term which guarantees the constraint conditions. The parameters are trained with backpropogating gradients of the loss function and the optimizer is L-BFGS \cite{liu1989limited} where the learning rate is 0.5. In practice, the model architecture of neural networks is the residual network (ResNet) \cite{he2016deep}. These residual
networks are easier to optimize, and it can gain accuracy from considerably increased depth. The structure of ResNet improves the result of deep networks, 
{\ls because there are more previous information retained. 
A brief illustration of ResNet is in Figure \ref{resnet}.} 

\begin{figure}
	\centering
	\includegraphics[width=0.8\linewidth]{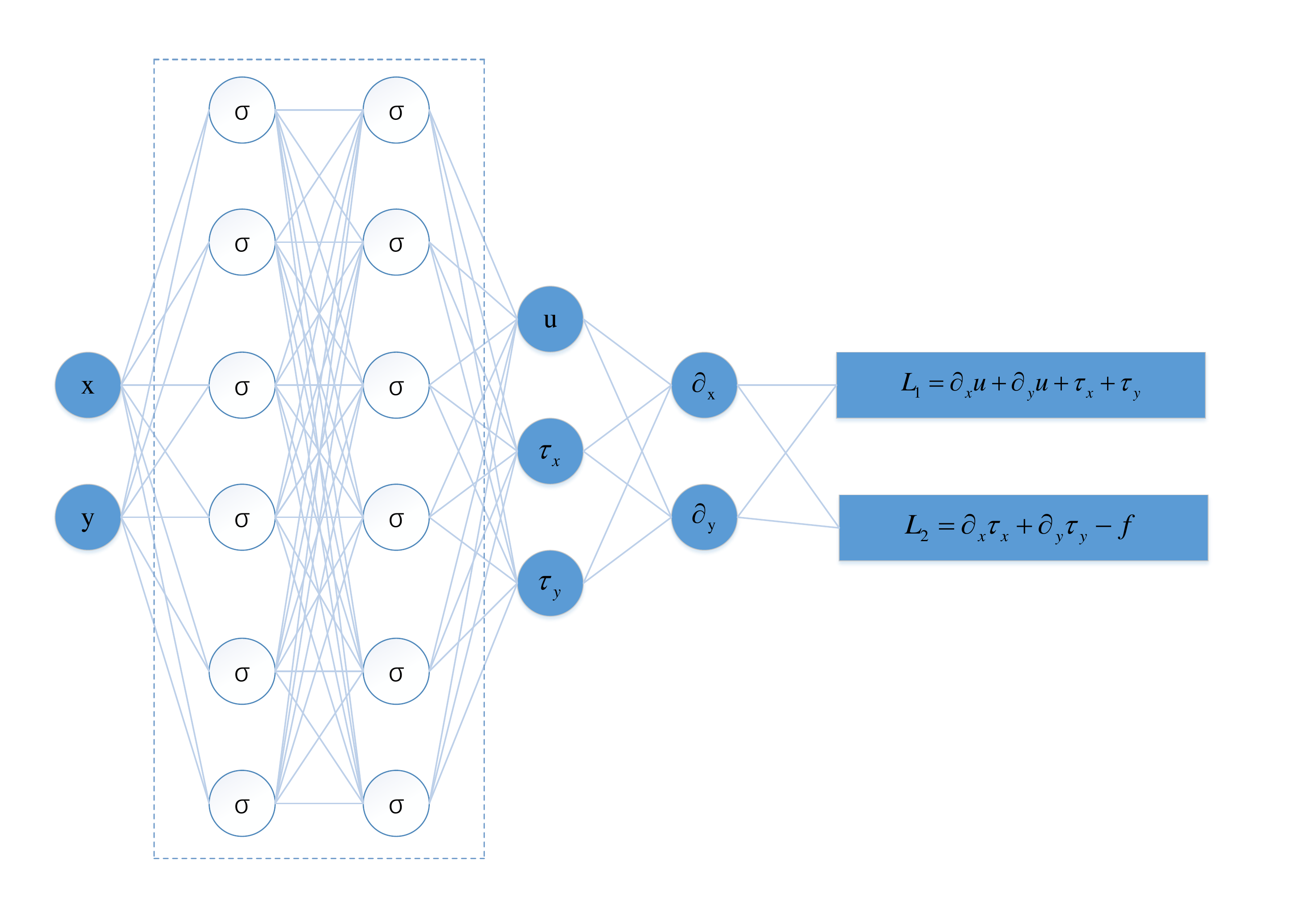}
	\caption{Illustration of the neural networks. $x,y$ are inputs, $u,\tau_x,\tau_y$ are outputs and 
	 {\ls the} dashed box with $\sigma$ means the architecture of plain fully-connected neural networks.  }
	\label{illu_nn}
\end{figure}

\begin{figure}
	\centering
	\includegraphics[width=0.3\linewidth]{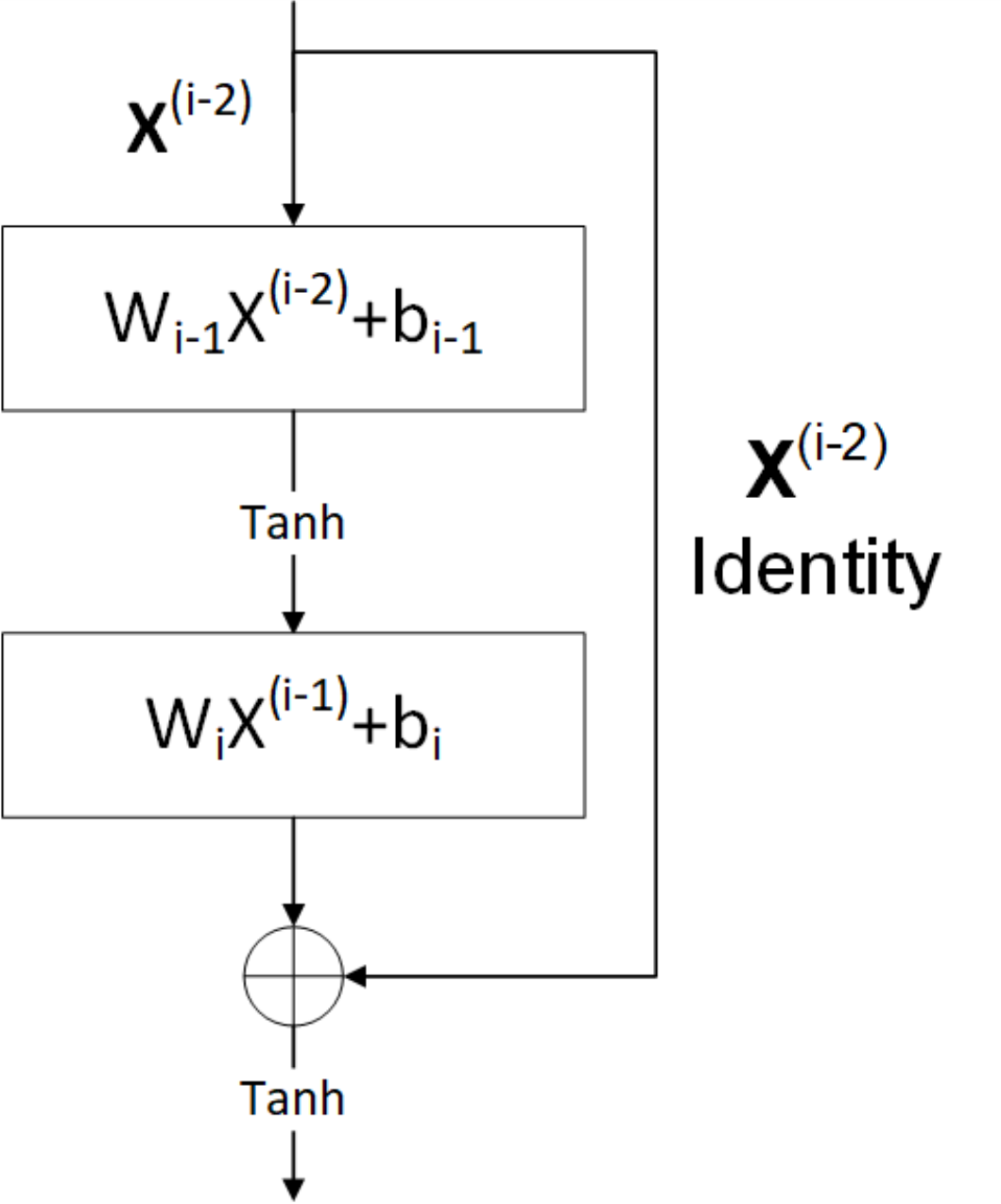}
	\caption{The residual network building block of our method.}
	\label{resnet}
\end{figure}

\subsection{Deep domain decomposition method}
In this part, we propose the main algorithms of our D3M. Because the physics-constrained neural network is mesh-free, we improve Schwarz alternating method with a better inhomogeneous D3M sampling method at junctions $\Gamma_{i}$ to accelerate convergence. 
{\ls We note the performance of normal deep variational networks and mixed residual networks 
can deteriorate when the underlying problem has inhomogeneous boundary conditions. 
Our treatment to overcome this weakness is to introduce the following boundary function.} 
\begin{mdefinition}(Boundary function)
	A smooth function $\mathfrak{g}(x,y)$ is called {\ls a} boundary function associated with $\Omega$ if
	\begin{equation}
	\mathfrak{g}(x, y) = 
	\begin{aligned}
	&e^{-a\cdot d(x,y,\partial\Omega)}u(x, y), \ \ (x, y) &\in \Omega,
	\end{aligned}
	\end{equation} 
	where $a\gg 1$ is a coefficient, the notation $d(x,y,\partial\Omega)$ denotes the shortest Euclidean distance between $(x,y)$ and $\partial\Omega$. If the point $(x,y)$ is on the boundary, $\mathfrak{g}(x,y)=u(x,y)$. If not, the value of $\mathfrak{g}(x,y)$ decreases to zero sharply. And we define $v := u - \mathfrak{g}$, 
	{\ls where} $v$ satisfies
	\begin{equation} \label{g_form}
	\left\{
	\begin{aligned}
	-\Delta v &= f + \Delta \mathfrak{g}, \ \ &\mathrm{in} \ \Omega,\\
	v &= 0, \ \ &\mathrm{on} \ \partial\Omega.
	\end{aligned}
	\right.
	\end{equation}
\end{mdefinition}
{\ls Letting} $v_i = u_i - \mathfrak{g}_i$ on each subdomain, Equation \eqref{mixres} can be represented as
\begin{equation}\label{mixres_b}
\left\{
\begin{aligned}
\tau &= - \nabla v_i, \ \ &\mathrm{in} \ \Omega_i,\\
\nabla\cdot\tau &= f+\Delta \mathfrak{g}_i, \ \ &\mathrm{in} \ \Omega_i.
\end{aligned}
\right.
\end{equation}
{\ls The} mixed residual loss is
\begin{equation} \label{g_L_dd}
\small
\begin{aligned}
L(\tau_i,v_i,q) &= \int_{\Omega_i}[(\tau_i+\nabla v_i)^2+(\nabla\cdot \tau_i-f-\Delta \mathfrak{g}_i)^2]dxdy + q\int_{\partial\Omega_i} v_i^2 dxdy,\\
&\approx\frac{1}{m_1}\sum\limits_{k=1}^{m_1} [(\tau_i^{(k)}+\nabla v_i^{(k)})^2 + (\nabla\cdot\tau_i^{(k)} - f - \Delta\mathfrak{g}_i)^2] +\frac{1}{m_2}\sum\limits_{j=1}^{m_2}q \cdot(v_i^{(j)})^2.
\end{aligned}
\end{equation}
It should be noted that, the integration is completed by Monte Carlo, such that the domain decomposition reduces the variance of samples significantly with the same number of data because the area of samples becomes smaller.

\begin{algorithm}[H]
	\caption{Deep domain decomposition}
	\label{main}
	\begin{algorithmic}[1]
		\State {\bfseries Input:} $\Omega={\lt(x_0,x_1)\times(y_0,y_1)}$, $p$, $\Gamma_i$, $\eta$, $\theta$, $n$, $m_1$, $m_2$.
		\State {\bfseries Initialize:} $\epsilon = 10\times\eta$, $k=0$, $gv_i^0=\bm{0}$, $S_i$, $g_i$.
		\State Divide the physical domain $\Omega$ into $\Omega_1,\cdots,\Omega_p$.
		\While {$\epsilon>\eta$}
		\State Run Algorithm \eqref{left} in each subdomain in parallel.
		\State $\epsilon = \frac{1}{p}\sum\limits_{i=1}^{p}\Vert Sol_i^{(k+1)} - Sol_i^{(k)}\Vert_2^2$.
		\State $k = k + 1$.
		\EndWhile
		\State Merge $p$ parts $Sol_i^{(k)}$ and get $Dnn_{sol}^{(k)}$.
		\State {\bfseries Return:} $Dnn_{sol}^{(k)}$.
	\end{algorithmic}
\end{algorithm}

\begin{algorithm}[H]
	\caption{Training for subdomain $\Omega_i$}
	\label{left}
	\begin{algorithmic}[1]
		\State {\bfseries Input:} $S_i$, $gv_i^{k}$, $g_i^{k}$, $n$, $m_1$, $m_2$.
		\State Construct function  $\mathfrak{g}_i$ using value of $gv_i^{k}$, $v_i = \mb{N}_u - \mathfrak{g}_i$.
		\For{$n$ steps}
			\State Sample minibatch of $m_1$ samples $\hat{S}_i = \{(x_i, y_i)\}_{i=1}^{m_1}$ in $\Omega_i$.
			\State Sample minibatch of $m_2$ samples $\hat{g}_i = \{(x_i, y_i)\}_{i=1}^{m_2}$ on $\partial\Omega_i$.
			\State Update the parameters $\theta_i$ by descending its stochastic gradient: 
\begin{equation*}
\small
\begin{aligned}
\theta_i^{(k+1)} =  \theta_i^{(k)} -  
\nabla_{\theta} \frac{1}{m_1}\sum\limits_{k=1}^{m_1} [(\mb{N}_\tau^{(k)}+\nabla v_i^{(k)})^2 + (\nabla\cdot\mb{N}_\tau^{(k)} - f - \Delta\mathfrak{g}_i)^2] - \nabla_{\theta} \frac{1}{m_2}\sum\limits_{j=1}^{m_2}(q \cdot v_i^{(j)})^2.
\end{aligned}	
\end{equation*}			
		\EndFor
		\State $Sol_i^{(k+1)} = \mb{N}_u(S_i)$.
		\State $gv_i^{(k+1)} = \mb{N}_u(g_i^{(k)})$.
		\State {\bfseries Return:} $Sol_i^{(k+1)}$, $gv_i^{(k+1)}$.
	\end{algorithmic}
\end{algorithm}
The procedure of D3M is as follows. We first divide the domain $\Omega$ into $d$ subdomains, and each two {\ls neighboring} subdomains are overlapping. The {\ls local} solution of PDEs on each subdomain is replaced by neural networks which can be trained through the variational principle, where the {\ls global} solution on the whole domain consists of these local solutions on subdomains. To be more precise, let $\Gamma_i$ denote decomposed junctions, $\theta$ is initial weights of neural networks, $\eta$ is the threshold of accuracy, $S_i$ and $g_i$ are the samples generated in $\Omega_i$ and on interface $\Gamma_i$ to evaluate the output of networks in each iteration, $\hat{S}_i$ are training samples in subdomain $\Omega_i$, $\hat{g}_i$ are training samples on $\Gamma_i$, $n$ is training time in each iteration, $m_1$ and $m_2$ are batch sizes, $\bf{N}_u$ is the neural networks for $u$, $\bf{N}_\tau$ is the neural networks for $\tau$ , $k$ is the iteration time, and $Sol_i^{(k+1)}$ is the output of networks for subdomain $\Omega_i$ in {\ls $(k+1)$-th} iteration. The formal description of D3M {\ls is presented} in Algorithm \ref{main}.
\section{Analysis}\label{analysis}
{\ls While} mixed residual formulation is a special case, we consider the basic functional formulation first,
\begin{equation} \label{eq_vf_sec4}
J(u) = \int_{\Omega}\frac{1}{2}\nabla u\cdot\nabla u - fu dS.
\end{equation}
\begin{mdefinition}
Given $m$ closed subspaces $\{V_i\}_{i=1}^m$ and $V = \sum_{i=1}^{m}V_i \in C^2(\Omega)$, for any $R < \infty$, and a proper, lower semi-continuous, coercive convex functional $J: V \rightarrow \Re $, 
{\ls we denote} $K_R := \{u\in V|J(u)<R\}$.
\end{mdefinition}
\begin{massumption}\label{assum1}
	$J\in C^1(K_R)$ and $\exists\ \alpha_R>0$\ \ s.t. $\forall v,u\in K_R$
	\begin{equation}
	\begin{aligned}
	&J(v)-J(u)-(J'(u),v-u)\geq\alpha_R|v-u|^2,
	\end{aligned}
	\end{equation}
	where J' is uniformly continuous on $K_R$.
\end{massumption}
\begin{mproposition}\label{pro2}
$\mathrm{(P.L.Lions, 1989)}$ The alternating Schwarz method \eqref{decom1} and \eqref{decom2} converges to the solution of $u$ of Equation \eqref{mixres_b}. The error bound of $\hat{u_1}^{k+1}$ and $\hat{u_2}^{k+1}$ can be estimated via maximum principle \cite{kantorovich1960approximate,lions1989schwarz}, $\exists\ \rho\in(0,1)$ such that for $\forall\ k\geq0$
\begin{equation}
\Vert u|_{\Omega_i}-\hat{u_i}^{k+1}\Vert_{L^{\infty}(\Omega_i)}\leq \rho^k\Vert u|_{\Omega_i}-\hat{u}^{0}_{i}\Vert_{L^{\infty}(\Omega_i)}.
\end{equation}
where constants $\rho$ is close to one if the overlapping region $\Omega_{i,j}$ is thin.
\end{mproposition}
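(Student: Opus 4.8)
The plan is to follow the classical maximum-principle argument of Lions, reducing the whole-subdomain estimate to a contraction on the interface traces by comparison against fixed harmonic barrier functions. First I would introduce the iteration errors $e_1^{k+1/2} := u|_{\Omega_1} - u_1^{k+1/2}$ and $e_2^{k+1} := u|_{\Omega_2} - u_2^{k+1}$, where $u_1^{k+1/2}, u_2^{k+1}$ are the Schwarz iterates of \eqref{decom1}--\eqref{decom2} (written $\hat u_i$ in the statement). Subtracting \eqref{decom1} from the restriction to $\Omega_1$ of the equation satisfied by $u$, the source term $f$ (and the fixed $\mathfrak{g}_i$ in the equivalent form \eqref{mixres_b}) cancels, so each error is harmonic: $-\Delta e_1^{k+1/2}=0$ in $\Omega_1$ and $-\Delta e_2^{k+1}=0$ in $\Omega_2$. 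Their boundary data are inherited from the transmission conditions: $e_1^{k+1/2}=e_2^{k}$ on $\Gamma_1$ and $=0$ on $\partial\Omega_1\cap\partial\Omega$, while $e_2^{k+1}=e_1^{k+1/2}$ on $\Gamma_2$ and $=0$ on $\partial\Omega_2\cap\partial\Omega$. Here I use the geometry already fixed in Section~\ref{odd}, namely $\Gamma_2=\partial\Omega_2\cap\Omega_1\subset\Omega_1$ and $\Gamma_1=\partial\Omega_1\cap\Omega_2\subset\Omega_2$, so each interface lies in the interior of the other subdomain.

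Next I would construct two fixed comparison functions: let $w_1$ be harmonic on $\Omega_1$ with $w_1=1$ on $\Gamma_1$ and $w_1=0$ on $\partial\Omega_1\cap\partial\Omega$, and define $w_2$ analogously on $\Omega_2$. By the weak maximum principle $0\le w_i\le1$, and since $e_1^{k+1/2}$ and $\pm\,(\max_{\Gamma_1}|e_2^{k}|)\,w_1$ are harmonic with an ordering of their boundary values, the maximum principle yields the pointwise envelope
\begin{equation}
|e_1^{k+1/2}(x)|\le\Big(\max_{\Gamma_1}|e_2^{k}|\Big)\,w_1(x),\qquad x\in\Omega_1 .
\end{equation}
Restricting to the interior interface $\Gamma_2\subset\Omega_1$ and setting $\rho_1:=\max_{\Gamma_2}w_1$, then running the identical argument for the second half-step with $\rho_2:=\max_{\Gamma_1}w_2$, I would chain the two bounds into the interface contraction
\begin{equation}
\max_{\Gamma_1}|e_2^{k+1}|\le\rho_2\,\max_{\Gamma_2}|e_1^{k+1/2}|\le\rho_1\rho_2\,\max_{\Gamma_1}|e_2^{k}| .
\end{equation}
Iterating gives $\max_{\Gamma_1}|e_2^{k}|\le(\rho_1\rho_2)^k\,\max_{\Gamma_1}|e_2^{0}|$, and one further application of the envelope (using $0\le w_i\le1$) transfers this decay from the interface trace to the whole subdomain, producing the stated $L^{\infty}(\Omega_i)$ bound with $\rho=\rho_1\rho_2$.

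The crux of the argument, and the step I expect to be hardest to make fully rigorous, is the \emph{strict} inequalities $\rho_1<1$ and $\rho_2<1$. The weak maximum principle only gives $w_1\le1$, with equality a priori possible; what is needed is that $w_1$ stays strictly below $1$ on the compact set $\Gamma_2$. This is where the strong maximum principle (equivalently the Hopf boundary-point lemma) enters: $w_1$ is nonconstant because it vanishes on $\partial\Omega_1\cap\partial\Omega$, so it cannot attain its maximum value $1$ at any interior point, whence $w_1<1$ throughout $\Omega_1$; compactness of $\Gamma_2$ (whose closure meets $\partial\Omega_1$ only where $w_1=0$) then upgrades this to the uniform bound $\rho_1<1$, and symmetrically $\rho_2<1$. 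Finally I would record the geometric dependence asserted in the statement: as the overlap $\Omega_{1,2}$ becomes thin, $\Gamma_2$ approaches $\Gamma_1$, so the harmonic function $w_1$ has little room to decay from its boundary value $1$ and $\rho_1\to1^-$ (likewise $\rho_2\to1^-$), which is precisely why the contraction factor $\rho$ degenerates toward one for thin overlaps.
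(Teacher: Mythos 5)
The paper does not actually prove Proposition~\ref{pro2}: it is imported verbatim as a classical result, with the proof delegated to the cited references (Lions 1989, Kantorovich--Krylov). Your reconstruction is, for all practical purposes, exactly the argument those references contain --- harmonic error equations after cancellation of $f$ (and of the fixed $\mathfrak{g}_i$), barrier functions $w_i$ harmonic with data $1$ on $\Gamma_i$ and $0$ on $\partial\Omega_i\cap\partial\Omega$, the interface contraction $\rho=\rho_1\rho_2$ with $\rho_i=\max_{\Gamma_j}w_i$, and the strong maximum principle to get the strict inequality $\rho_i<1$. You also correctly isolate the genuinely delicate point, namely that $\overline{\Gamma_2}$ must stay away from the corners where the boundary data of $w_1$ jumps from $1$ to $0$ (there $\limsup w_1=1$); for the overlapping-strip geometry of Section~\ref{odd} those corners are at positive distance from $\overline{\Gamma_2}$, so your compactness argument closes. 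Two small bookkeeping remarks: the exponent in the final estimate ($\rho^{k}$ versus $\rho^{k+1}$, and whether the right-hand side should carry the sup of the initial error over the interface rather than over all of $\Omega_i$) is left loose, but the paper's own statement is equally loose on this; and the "convergence to the solution of \eqref{mixres_b}" phrasing is really convergence for the Poisson iteration \eqref{decom1}--\eqref{decom2}, which your observation that $\mathfrak{g}_i$ cancels in the error equation handles. No gap.
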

\begin{mlemma} \label{l1}
	$\mathrm{(K.\ Hornik, 1991)}$ On each subdomain $\Omega_i$, neural network $\mb{N}_i$ with continuous derivatives up to order $K$ are universal approximators in Sobolev space with an order $K$, which means $\mb{N}_i \in H^1(\Omega_i)$.
\end{mlemma}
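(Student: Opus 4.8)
The plan is to read this as a Sobolev-space version of the classical universal approximation theorem, for which the smoothness of the activation function is the essential ingredient. Since the networks employed here use the $\tanh$ activation, which is $C^\infty$ and non-polynomial, the hypothesis that $\mb{N}_i$ has continuous spatial derivatives up to order $K$ is automatically met, and in particular each network output lies in $C^K(\overline{\Omega_i}) \subset H^K(\Omega_i) \subset H^1(\Omega_i)$ because $\Omega_i$ is bounded; this already secures the membership claim $\mb{N}_i \in H^1(\Omega_i)$. The density (``universal approximator'') part is what requires work.

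First I would recall the classical result (Cybenko; Hornik, Stinchcombe and White) that single-hidden-layer networks with a non-polynomial continuous activation are dense in $C(\overline{\Omega_i})$ under the supremum norm. The goal is then to upgrade this to density in the $C^K$ norm, since approximation in $C^K(\overline{\Omega_i})$ immediately yields approximation in $H^K(\Omega_i)$, and hence in $H^1(\Omega_i)$, via the continuous embedding on the bounded subdomain: one has $\|v\|_{H^K(\Omega_i)} \le C(\Omega_i)\,\|v\|_{C^K(\overline{\Omega_i})}$.

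The heart of the argument, following Hornik (1991), is to show that the network class together with its spatial derivatives up to order $K$ is rich enough to approximate an arbitrary target and its derivatives simultaneously. The key device is to differentiate $\phi(\mb{w}\cdot(x,y)+b)$ with respect to the weight vector $\mb{w}$: the mixed parameter-derivatives produce terms of the form $x^{\alpha}\,\phi^{(|\alpha|)}(b)$, and because $\phi$ is non-polynomial there exist base points at which $\phi^{(k)}(b)\neq 0$ for each order $k$. Hence the closed span of the network functions contains every monomial, and by the Weierstrass theorem it therefore contains all polynomials and so all of $C^K(\overline{\Omega_i})$.

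The step I expect to be the main obstacle is this last density argument carried out in the $C^K$ topology rather than the $C^0$ topology. One must justify that the parameter-derivatives of the activation are themselves limits, in the $C^K$ norm, of finite network configurations (difference quotients of $\phi$-evaluations), so that the generated monomials genuinely lie in the $C^K$-closure of the network class; this requires that differentiation in $\mb{w}$ commute with the limits in the sense needed to stay inside that closure. Once $C^K$ density is established, the passage to $H^1(\Omega_i)$ is routine through the embedding on the bounded set $\Omega_i$, which completes the proof.
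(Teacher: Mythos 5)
This lemma is not proved in the paper at all: it is quoted verbatim as a known result of Hornik (1991) and used as a black box, so there is no ``paper proof'' to compare against. Judged on its own, your sketch is a correct route to the stated conclusion, but it is worth noting that it is not Hornik's route. Your argument --- differentiate $\phi(\mb{w}\cdot(x,y)+b)$ in the weights to generate monomials $x^{\alpha}\phi^{(|\alpha|)}(b)$, invoke non-polynomiality of $\tanh$ to get nonvanishing derivatives, conclude density of polynomials in $C^{K}(\overline{\Omega_i})$ by Weierstrass, then pass to $H^{K}(\Omega_i)\subset H^{1}(\Omega_i)$ by the embedding on a bounded subdomain --- is essentially the Leshno--Lin--Pinkus--Schocken/Pinkus argument, which needs the activation to be smooth and non-polynomial. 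Hornik's own proof is different and slightly more general: it establishes density in $C^{m}$ on compacta (and in $W^{m,p}(\mu)$ for finite measures) for any nonconstant bounded $C^{m}$ activation, via a convolution/Fourier-type argument rather than via generating polynomials, and in particular does not require non-polynomiality in the analytic sense you use. For $\tanh$ both hypotheses hold, so either path delivers the lemma; yours buys a more elementary and self-contained argument at the price of a stronger hypothesis on $\phi$. The one step you flag as delicate --- that the parameter-derivatives lie in the $C^{K}$-closure of the network class --- does go through: since $\tanh$ is $C^{\infty}$, the difference quotients in $\mb{w}$ converge to the corresponding derivative uniformly on $\overline{\Omega_i}$ together with all $x$-derivatives up to order $K$, so the closure is preserved. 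A final cosmetic remark: the lemma's phrasing ``which means $\mb{N}_i\in H^{1}(\Omega_i)$'' conflates membership with density; your proof correctly treats these as two separate claims (membership is immediate from smoothness and boundedness of $\Omega_i$, density is the substantive part).
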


\begin{mlemma} \label{l2}
	$\mathrm{(P.\ L.\ Lions, 1988)}$ If the variational formulation \eqref{eq_vf_sec4} satisfies the assumption \eqref{assum1}, then it follows that there exists a sequential $u_n\in V_i$ obtained by Schwarz alternating method converges to the minimum $u_i^*$ of $J(u_i)$ on each subdomain $\Omega_i$. 
\end{mlemma}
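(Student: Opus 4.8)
The plan is to read the Schwarz alternating method \eqref{decom1}--\eqref{decom2} as a successive subspace correction (block coordinate descent) for the energy $J$ of \eqref{eq_vf_sec4}, and to let the strong convexity in Assumption~\ref{assum1} force convergence to the unique minimizer. First I would establish existence and uniqueness of $u_i^*$. The inequality $J(v)-J(u)-(J'(u),v-u)\geq \alpha_R|v-u|^2$ with $\alpha_R>0$ says exactly that $J$ is strongly convex on the sublevel set $K_R$; combined with the coercivity and lower semicontinuity built into the definition of $K_R$, this yields a unique minimizer of $J$ on the relevant closed convex set, so the target $u_i^*$ is well defined.

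Next I would extract the two structural facts that power the argument. Each half-step of \eqref{decom1}--\eqref{decom2} solves its local problem exactly, which is equivalent to minimizing $J$ over a correction in one subspace $V_i$ with the other components frozen; hence the iterates $u^{k}$ (indexing half-steps) satisfy the Galerkin orthogonality $(J'(u^{k+1}),w)=0$ for all $w\in V_i$, and the energy $J(u^{k})$ is monotonically nonincreasing, so (choosing $R$ to contain the initial guess) all iterates remain in $K_R$. Since the correction $u^{k+1}-u^{k}$ lies in the same $V_i$, orthogonality gives $(J'(u^{k+1}),u^{k+1}-u^{k})=0$, and feeding this into Assumption~\ref{assum1} with $v=u^{k}$, $u=u^{k+1}$ turns the energy drop into the quantitative bound $J(u^{k})-J(u^{k+1})\geq \alpha_R|u^{k+1}-u^{k}|^2$.

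Convergence then follows by summability. Because $J$ is bounded below and $J(u^{k})$ decreases, telescoping the last bound gives $\sum_k |u^{k+1}-u^{k}|^2<\infty$, so the corrections vanish, $|u^{k+1}-u^{k}|\to 0$. Using the uniform continuity of $J'$ on $K_R$ I would pass to the limit in the per-step orthogonality relations and show that any limit point $u_\infty$ satisfies $(J'(u_\infty),w)=0$ for every $w$ in $V=\sum_i V_i$; strong convexity makes this critical point unique, so $u_\infty=u_i^*$ on each $\Omega_i$, and the approximability of $V$ by the networks is supplied by Lemma~\ref{l1}. The geometric rate, with factor $\rho\in(0,1)$, is then inherited from Proposition~\ref{pro2}.

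The step I expect to be the main obstacle is this final limit-passage: upgrading ``the corrections vanish and each local optimality condition holds'' to ``the limit is a global critical point.'' The difficulty is that the orthogonality conditions are anchored at different iterates rather than at a single point, so I must use uniform continuity of $J'$ to transport all the local conditions onto the common limit, and I must know that the subspaces together recover $V$ in a stable way. This stability is exactly what the overlap of the subdomains provides, and its degradation as the overlap thins is what is reflected in $\rho\to 1$ in Proposition~\ref{pro2}.
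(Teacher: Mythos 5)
The paper never proves this lemma: it is stated as a quoted result of P.~L.~Lions (1988) and invoked as a black box in the theorem that follows, so there is no internal proof to compare yours against. What you have written is, in outline, a faithful reconstruction of Lions' original successive-minimization argument: read each half-sweep of \eqref{decom1}--\eqref{decom2} as an exact minimization of $J$ from \eqref{eq_vf_sec4} over an affine translate of one subspace $V_i$, extract the Galerkin orthogonality $(J'(u^{k+1}),w)=0$ for all $w\in V_i$, combine it with the inequality of Assumption~\ref{assum1} to get $J(u^{k})-J(u^{k+1})\ge\alpha_R|u^{k+1}-u^{k}|^2$, and telescope. That is the right skeleton, and you have correctly identified the limit passage as the only delicate step.

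Two points would need one more line each in a full write-up. First, in infinite dimensions the bounded sequence $\{u^{k}\}$ only has weak limit points a priori, so rather than chasing limit points it is cleaner to bound $J(u^{k})-J(u^{*})\le (J'(u^{k}),u^{k}-u^{*})$ by convexity, split $u^{k}-u^{*}$ into components in $V_1$ and $V_2$, annihilate one component by the orthogonality at step $k$ and the other by the orthogonality at the neighboring half-step together with uniform continuity of $J'$ and $|u^{k+1}-u^{k}|\to0$; strong convexity then upgrades $J(u^{k})\to J(u^{*})$ to strong convergence. This requires a norm-controlled decomposition $V=V_1+V_2$, which is exactly where the overlap of the subdomains enters, as you note. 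Second, your closing appeals to Lemma~\ref{l1} (network approximation) and to the geometric rate of Proposition~\ref{pro2} are extraneous: the lemma asserts only convergence of the Schwarz sequence itself, with no neural networks and no rate, so those remarks should be dropped rather than folded into the proof.
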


\begin{mtheorem}
	$J_{i}(\mb{N}_i)$ denotes the objective function on the subdomain $\Omega_i$. Under above assumptions, for $\forall\epsilon>0$, $\exists M>0$, while iteration times $k>M$, $\mb{N}_i^k$ converges to optimal solution $u_i^*$ of $J(u_i)$ in subdomain $\Omega_i$ for a constant $C>0$
	\begin{equation}
		|\mb{N}_i^k-u_i^*|^2\leq C\epsilon^{1/2} \ \ \mathrm{in}\ \Omega_i.
	\end{equation}
\end{mtheorem}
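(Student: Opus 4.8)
The plan is to decompose the total error $|\mathbf{N}_i^k - u_i^*|$ into an \emph{iteration} error coming from the Schwarz sweeps and an \emph{approximation} error coming from replacing each exact local solve by a trained network, and then to convert a bound on the energy gap into a bound on the solution error by exploiting the strong convexity in Assumption \ref{assum1}. Throughout, write $\hat{u}_i^k$ for the exact solution of the $k$-th Schwarz subproblem \eqref{decom1}--\eqref{decom2} on $\Omega_i$, so that $\mathbf{N}_i^k$ is the trained network that approximates $\hat{u}_i^k$, and use $|\cdot|$ for the same norm in which Assumption \ref{assum1} is stated.

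First I would record the basic consequence of convexity at the minimizer. Since $u_i^*$ minimizes $J$ over $V_i$, the first-order optimality condition gives $(J'(u_i^*),\,\mathbf{N}_i^k - u_i^*)=0$; applying Assumption \ref{assum1} with $v=\mathbf{N}_i^k$ and $u=u_i^*$ then yields
\begin{equation}
\alpha_R\,|\mathbf{N}_i^k - u_i^*|^2 \leq J(\mathbf{N}_i^k) - J(u_i^*).
\end{equation}
Hence it suffices to control the energy gap $J(\mathbf{N}_i^k) - J(u_i^*)$, which I would split as the sum of $J(\mathbf{N}_i^k) - J(\hat{u}_i^k)$ and $J(\hat{u}_i^k) - J(u_i^*)$.

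For the second (Schwarz) term, Lemma \ref{l2} guarantees that the exact alternating iterates $\hat{u}_i^k$ converge to $u_i^*$, and Proposition \ref{pro2} supplies the geometric rate $\rho^k$; combined with the continuity of $J$ on $K_R$ (which follows from $J\in C^1(K_R)$ in Assumption \ref{assum1}), this term falls below any prescribed tolerance once $k$ exceeds some $M=M(\epsilon)$. For the first (approximation) term I would invoke Lemma \ref{l1}: because $\tanh$ networks are dense in $H^1(\Omega_i)$, training can drive $\|\mathbf{N}_i^k - \hat{u}_i^k\|_{H^1(\Omega_i)}$ arbitrarily small, and a mean-value estimate $J(\mathbf{N}_i^k) - J(\hat{u}_i^k) = (J'(\eta),\,\mathbf{N}_i^k - \hat{u}_i^k)$ together with the uniform continuity, hence local boundedness, of $J'$ on $K_R$ bounds this term by the same approximation error. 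Inserting both controlled contributions back into the convexity inequality produces the asserted bound with an explicit constant $C$ depending on $\alpha_R$ and the $J'$-bound on $K_R$.

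The main obstacle is the bookkeeping that makes the final exponent exactly $\epsilon^{1/2}$ rather than $\epsilon$. The network does not minimize $J$ itself but a Monte Carlo estimate sampled at $m_1,m_2$ points (see \eqref{g_L_dd}), so the approximation term carries an additional statistical error that must be absorbed alongside the Taylor remainder; and the cross term between the iteration and approximation errors has to be balanced, most naturally by a Young-type inequality with a weight tuned to $\epsilon$, which is precisely where the square-root rate emerges. The delicate point is keeping every iterate inside a fixed sublevel set $K_R$ and $J'$ uniformly continuous there, so that Assumption \ref{assum1} applies with the \emph{same} constant $\alpha_R$ uniformly in $k$; establishing this stability of the sequence $\{\mathbf{N}_i^k\}$ under the combined Schwarz-plus-training dynamics is the step I expect to require the most care.
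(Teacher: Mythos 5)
Your proposal is correct in substance and uses the same three ingredients as the paper (Lemma \ref{l1} for the network-to-iterate distance, Lemma \ref{l2} for Schwarz convergence, and the strong convexity in Assumption \ref{assum1}), but it routes them differently. The paper never passes through the energy gap $J(\mb{N}_i^k)-J(u_i^*)$: it bounds $|u_n-u_i^*|$ directly by the a posteriori estimate $|u_n-u_i^*|\leq \frac{C_0}{\alpha_R}\,\omega|u_{n+1}-u_n|$ (with $\omega$ the modulus of continuity of $J'$), controls the consecutive-iterate gap by $|u_{n+1}-u_n|^2\leq \frac{1}{\alpha_R}|J(u_{n+1})-J(u_n)|\leq\frac{1}{\alpha_R}\epsilon$, and then applies the triangle inequality $|\mb{N}_i-u_i^*|\leq|\mb{N}_i-u_n|+|u_n-u_i^*|$ in the solution norm; the $\epsilon^{1/2}$ is exactly the square root coming out of that consecutive-iterate bound, not a Young-inequality balancing act, and the Monte Carlo sampling error you worry about is not addressed at all. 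Your route --- coercivity at the minimizer, $\alpha_R|\mb{N}_i^k-u_i^*|^2\leq J(\mb{N}_i^k)-J(u_i^*)$, followed by an energy-gap split --- is arguably cleaner: it directly produces a bound on the \emph{squared} error, which is the form actually asserted in the theorem statement (the paper's own proof ends with a bound on the unsquared $|\mb{N}_i-u_i^*|$, a mismatch with its claim), and if both energy contributions are $O(\epsilon)$ you in fact get the stronger rate $C\epsilon$ rather than $C\epsilon^{1/2}$. Both arguments share the same unaddressed gaps: neither verifies that $\mb{N}_i^k$ lies in the admissible set $V_i$ (the network does not satisfy the boundary condition exactly) nor that all iterates stay in a fixed sublevel set $K_R$, which you correctly flag as the delicate point.
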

For concision, we use $\mb{N}_i$ to represent $\mb{N}_i^k$ in the following part.
\begin{proof}
	$u_n\in V_i$ denotes the sequential in Lemma \ref{l2}, there exist
	\begin{equation}\label{conver0}
		|u_n-u_i^*|\leq\frac{C_0}{\alpha_R}\omega|u_{n+1}-u_n|,
	\end{equation}
	where $C_0 > 0$ is a constant, $\omega|u_{n+1}-u_n|=|J'(u_{n+1})-J'(u_n)|$.\\
	Then with Lemma \ref{l1} \cite{hornik1991approximation}, in each subdomain $\Omega_i$, neural network $\mb{N}_i\in H^1$. If the training times $k$ in each subdomain is enough, the universal approximation ensures the distance between functions $\mb{N}_i$ and $u_n$ is close enough in the Sobolev space, with the constant $C_1>0$
	\begin{equation}\label{conver1}
		|\mb{N}_i-u_n|^2\leq C_1^2\epsilon^2.
	\end{equation}
	While $u_n$ converging to the minimum of $u_i^*$ of $J(u)$, by the optimality conditions it is clear that
	\begin{equation}
		\begin{aligned}
			(J'(u_{n+1}),u_{n+1}-u_n) < \epsilon,\\
			(J'(u_{n+1}),u_{n+1}-u_i^*) < \epsilon.
		\end{aligned}
	\end{equation} 
	Under the assumption, $\exists\ C>0$ and the difference between two iterations can be represented as
	\begin{equation}\label{conver2}
		{\lt|u_{n+1}-u_n|^2\leq \frac{1}{\alpha_R}|J(u_{n+1})-J(u_n)|\leq \frac{1}{\alpha_R}\epsilon}
	\end{equation}
	Consider equation \eqref{conver0}, \eqref{conver1} and \eqref{conver2}, we have
	\begin{equation}
		\begin{aligned}
			|\mb{N}_i - u_i^*| &\leq |\mb{N}_i-u_n| + |u_n-u_i^*|,\\
			&\leq C_1\epsilon + \frac{C_0}{\alpha_R}\omega|u_{n+1}-u_n|,\\
			&\leq C\epsilon^{1/2}.
		\end{aligned}
	\end{equation}
\end{proof}

\begin{mtheorem}
	For a given boundary function $\mathfrak{g}$ and a fixed $q$, the optimal solution $\mb{N}_u^*$ of Equation \eqref{eq_variational_nn} and $v^*$ of Equation \eqref{mixres_b} satisfy $\mb{N}_u^* = v^* + \mathfrak{g}$.
\end{mtheorem}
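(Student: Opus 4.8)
The plan is to exhibit $\mb{N}_u^*$ and $v^*+\mathfrak{g}$ as solutions of one and the same elliptic boundary-value problem and then close by uniqueness. First I would characterize the minimizer of \eqref{eq_variational_nn}. Its interior integrand is a sum of two squared residuals, and Lemma \ref{l1} guarantees the network class is rich enough in $H^1(\Omega)$ to drive each residual to zero; hence at the optimum $\mb{N}_\tau^* = -\nabla\mb{N}_u^*$ and $\nabla\cdot\mb{N}_\tau^* = f$, while the penalty $q\int_{\partial\Omega}\mb{N}_u^2$ forces $\mb{N}_u^* = 0$ on $\partial\Omega$. Eliminating $\mb{N}_\tau^*$ shows that $\mb{N}_u^*$ solves $-\Delta\mb{N}_u^* = f$ in $\Omega$ with homogeneous Dirichlet data.

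Next I would reduce the mixed system \eqref{mixres_b}: taking the divergence of $\tau = -\nabla v^*$ and substituting into $\nabla\cdot\tau = f+\Delta\mathfrak{g}$ yields $-\Delta v^* = f+\Delta\mathfrak{g}$ in $\Omega$, and \eqref{g_form} supplies $v^* = 0$ on $\partial\Omega$. The decisive step is then to set $w := v^*+\mathfrak{g}$ and check that it satisfies the very problem solved by $\mb{N}_u^*$: by linearity $-\Delta w = -\Delta v^* - \Delta\mathfrak{g} = (f+\Delta\mathfrak{g})-\Delta\mathfrak{g} = f$, and since the boundary function obeys $\mathfrak{g} = u$ on $\partial\Omega$ we obtain $w = v^*+\mathfrak{g} = 0$ there. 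As the homogeneous Dirichlet Poisson problem has a unique solution, $w = \mb{N}_u^*$, which is precisely the asserted identity.

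An equivalent and perhaps more transparent route is variational: for the fixed $\mathfrak{g}$ the affine substitution $\mb{N}_u = v+\mathfrak{g}$, $\mb{N}_\tau = \tau - \nabla\mathfrak{g}$ carries the integrand of \eqref{eq_variational_nn} term-by-term into that of the $v$-loss \eqref{g_L_dd}: the $\nabla\mathfrak{g}$ contributions cancel in the flux residual $\mb{N}_\tau+\nabla\mb{N}_u$ and reappear as the shifted source $f+\Delta\mathfrak{g}$ in the divergence residual. Since this map is an affine bijection, the two objectives share the same minimizer up to the shift, and the fixed multiplier $q$ only reweights the boundary term, which vanishes at the solution.

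The step I expect to be the main obstacle is justifying that the penalized boundary term enforces the Dirichlet condition exactly at the optimum rather than merely approximately. In the homogeneous setting this is clean, since $q\int_{\partial\Omega}\mb{N}_u^2$ and $q\int_{\partial\Omega}v^2$ vanish together at the true solution; for the inhomogeneous interface data on each $\Gamma_i$ one must lean on the defining property $\mathfrak{g}|_{\partial\Omega_i} = u|_{\partial\Omega_i}$ to convert the penalty on $\mb{N}_u$ into a penalty on $v$ and to secure $v^* = 0$ on the boundary. Once that correspondence is established, uniqueness of the elliptic solution finishes the proof.
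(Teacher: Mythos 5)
Your second, ``variational'' route is essentially the paper's own proof: the authors write out $L(\phi,v)$ with $\phi=-\nabla v$, substitute $v=u-\mathfrak{g}$ so that the $\nabla\mathfrak{g}$ terms cancel in the flux residual and shift the source to $f+\Delta\mathfrak{g}$ in the divergence residual, and conclude $L(\phi,v)=L(\tau,u)$, whence the minimizers correspond via $\mb{N}_u^*=v^*+\mathfrak{g}$. Your affine bijection $\mb{N}_u=v+\mathfrak{g}$, $\mb{N}_\tau=\tau-\nabla\mathfrak{g}$ is exactly this identity, stated slightly more carefully (the paper tacitly identifies the flux variable with $-\nabla u$, which would make the first residual vanish identically; your version keeps the flux free, which is the honest reading of the mixed loss). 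The one caveat concerns your first route: arguing that at the optimum both residuals of \eqref{eq_variational_nn} are exactly zero and then invoking uniqueness of the Dirichlet problem requires the exact solution to be attained within the network class, which Lemma~\ref{l1} does not give you --- it only provides density, so the infimum of the loss is zero but need not be achieved, and for a finite architecture it generally is not. The substitution argument is strictly stronger here because it identifies the two objectives as functionals, so their minimizers (or near-minimizers) correspond under the shift whether or not the minimum value is zero; I would lead with that route and drop the uniqueness argument, or at least flag it as conditional on exact attainability.
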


\begin{proof}
	We use $\tau$ and $\phi$ to denote $-\nabla\mb{N}_u$ and $-\nabla v$ respectively.
	\begin{equation}
	\small
		\begin{aligned}
			L(\phi,v)
			&=\int_{\Omega}[(\phi+\nabla v)^2+(\nabla\cdot\phi-f-\Delta\mathfrak{g})^2]dxdy + q\int_{\partial\Omega} v dxdy,\ \ \mathrm{if}\ v|_{\partial\Omega}=0\\
			&=\int_{\Omega}[(\tau+\nabla\mathfrak{g}+\nabla u-\nabla\mathfrak{g})^2+(\nabla\cdot\tau-f)^2]dxdy + q\int_{\partial\Omega} (u-\mathfrak{g}) dxdy\\
			&=L(\tau,u).
		\end{aligned}
	\end{equation}
	Function $v$ we optimized is also the optimal solution $\mb{N}_u$ with the formula $\mb{N}_u^*=v^*+\mathfrak{g}$.
\end{proof}

Up to now, we prove that D3M can solve steady Poisson's equation with variational formulations. Then, we extend D3M to more general quasilinear parabolic PDEs \eqref{quasi} with physics-constrained approaches.
\begin{equation}\label{quasi}
	{\lt\left\{
    \begin{aligned} &\operatorname{div}(\alpha( x, u_{i}( x), \tau_{i}(x))) + \gamma( x, u_{i}(x), \tau_{i}(t, x))=0, &x \in \Omega_{i} \\
    &u_{i}(t, x) =0, &x \in \partial \Omega_{i} \end{aligned}
    \right.}
\end{equation}
where $\tau_i$ denotes $\nabla u_i$, and $\Omega_{i}\in\mathbb{R}^d$ are decomposed boundary sets with smooth boundaries $\partial\Omega_{i}$. We recall the network space of subdomain $\Omega_{i}$ with generated data according to \cite{hornik1991approximation}
\begin{equation}
    \mathfrak{N}_i^n(\sigma)=\left\{h(x) : \mathbb{R}^{k} \rightarrow \mathbb{R} | h(x)=\sum_{j=1}^{n} \beta_{j} \sigma\left(\alpha_{j}^{T} x-\theta_{j}\right)\right\}
\end{equation}
where $\sigma$ is any activation function, $\mb{x}\in\mathbb{R}^{k}$ is one set of generated data, $\beta\in\mathbb{R}^n$, $\alpha\in\mathbb{R}^{k\times n}$ and $\theta\in\mathbb{R}^{k\times n}$ denote coefficients of networks. Set $\mathfrak{N}_i(\sigma)=\bigcup_{i=1}^{\infty}\mathfrak{N}_i^n(\sigma)$. Under the universal approximation of neural networks {\ls Lemma} \ref{l1}, in each subdomain the neural networks $f_i^n$ 
satisfies
{\lt
\begin{equation}
    \left\{
    \begin{aligned} &\operatorname{div}(\alpha( x, f^n_{i}( x), \tau^n_{i}(x)) + \gamma( x, f^n_{i}(x), \tau^n_{i}(t, x))=h^n, &x \in \Omega_{i} \\
    &f^n_{i}(t, x) =b^n, &x \in \partial \Omega_{i} \end{aligned}
    \right.
\end{equation}
}
where $h^n$ and $b^n$ satisfy
\begin{equation}\label{tozero}
    \Vert h^n\Vert_{2,\Omega_i}^2+\Vert b^n\Vert_{2,\partial\Omega_i}^2\to 0, \ \ as\ \ n\to\infty.
\end{equation}

For the following part of analysis, we make some assumptions.

\begin{massumption}\label{assum2}
    \begin{itemize}
    \item There is a constant $\mu > 0$ and positive functions $\kappa(x), \lambda(x)$ such that for all $x\in\Omega_i$ we have
    \begin{equation*}
    \|\alpha(x, u_i, \tau_i)\| \leq \mu(\kappa(x)+\|\tau_i\|),
    \end{equation*}
    and
    \begin{equation*}
    |\gamma(x, u_i, \tau_i)| \leq \lambda(x)\|\tau_i\|,
    \end{equation*}
    with $\kappa \in L^{2}\left(\Omega_{i}\right), \lambda \in L^{d+2+\eta}\left(\Omega_{i}\right)$ for some $\eta>0$.
    
    \item $\alpha(x,u_i,\nabla u_i)$ and $\gamma(x,u_i,\nabla u_i)$ are Lipschitz continuous in $(x, u_i,\nabla u_i) \in \Omega \times \mathbb{R} \times \mathbb{R}^{d}$.

    \item In each subdomain, the derivatives of solutions from alternating Schwarz method \eqref{decom1}, \eqref{decom2} converge to the derivative of solution $u_i$. Precisely, there exists a constant $\rho_1\in(0,1)$, such that for iteration times $\forall k\geq0$
    \begin{equation*}
    	\Vert\nabla u^*_i-\nabla u^k_i\Vert_{\infty}\leq\rho_1^k\Vert\nabla u^*_i-\nabla u^0_i\Vert_{\infty}.
    \end{equation*}
    
    \item $\alpha(x, u, \tau)$ is continuously differentiable w.r.t. $(t,x)$.
    
    \item There is a positive constant $ \nu > 0$ such that
    \begin{equation*}
    \alpha(x, u, \tau) \tau \geq \nu|\tau|^{2}
    \end{equation*}
    and \ $\forall \tau_1,\tau_2\in\mathbb{R}^d, \tau_1\neq\tau_2$
    \begin{equation*}
    \left\langle\alpha\left(x, u, \tau_{1}\right)-\alpha\left(x, u, \tau_{2}\right), \tau_{1}-\tau_{2}\right\rangle> 0.
    \end{equation*}

\end{itemize}
\end{massumption}

\begin{mtheorem}
    Suppose the domain $\Omega$ is decomposed into $\{\Omega_{i}\}_{i=1}^p$, $k>0$ denotes iteration times (omitted in notations for brief). $\mathfrak{N}_i(\psi)$ denotes networks space space in subdomain $\Omega_{i}$, where subdomains are compact. Assume that target function \eqref{quasi} has unique solution in each subdomain, nonlinear terms $div(x,u,\nabla u)$ and $\gamma(x,u,\nabla u)$ are locally Lipschitz in ($u_i,\nabla u_i$), and $\nabla u^k_i$ uniformly converges to $\nabla u^*_i$ with $k$. For $\forall \ \epsilon > 0$, there $\exists \ K>0$ such that there exists a set of neural networks $\{\mb{N}_i\in\mathfrak{N}_i(\psi)\}_{i=1}^p$ satisfies the $L^2$ error $E_2(\mb{N}_i)$ as follow
    \begin{equation}
        \sum_{i=1}^p \lim_{k\to\infty}E_2(\mb{N}_i) \leq {\lt K\epsilon}.
    \end{equation}
\end{mtheorem}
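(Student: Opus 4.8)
The plan is to control the total error by splitting, on each subdomain, the discrepancy between the trained network and the exact local solution into a \emph{network approximation} part (for a fixed Schwarz iterate) and a \emph{Schwarz iteration} part, and then to let the two limiting parameters---the network size $n$ and the iteration count $k$---tend to infinity. Concretely, writing $u_i^k$ for the exact Schwarz solution on $\Omega_i$ after $k$ sweeps, $u_i^*$ for its limit, and $f_i^n$ for the network satisfying \eqref{quasi} with interior residual $h^n$ and boundary residual $b^n$, I would use the triangle inequality $\|f_i^n - u_i^*\|_{L^2(\Omega_i)} \le \|f_i^n - u_i^k\|_{L^2(\Omega_i)} + \|u_i^k - u_i^*\|_{L^2(\Omega_i)}$. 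The second term is handled directly by the Schwarz convergence results (Proposition \ref{pro2}, Lemma \ref{l2}, and the third item of Assumption \ref{assum2}), which force $u_i^k \to u_i^*$ as $k\to\infty$. The entire analytic burden therefore falls on the first term, which I would bound by an energy estimate in terms of $h^n$ and $b^n$, and then drive to zero using the universal approximation property (Lemma \ref{l1}) encoded in \eqref{tozero}.

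For the energy estimate I would set $w := f_i^n - u_i^k$ and subtract the equation satisfied by $u_i^k$ from the one satisfied by $f_i^n$, obtaining $\operatorname{div}\!\big(\alpha(x,f_i^n,\nabla f_i^n) - \alpha(x,u_i^k,\nabla u_i^k)\big) + \big(\gamma(x,f_i^n,\nabla f_i^n) - \gamma(x,u_i^k,\nabla u_i^k)\big) = h^n$ in $\Omega_i$, with $w = b^n$ on $\partial\Omega_i$. Multiplying by $w$, integrating over $\Omega_i$, and integrating by parts produces a principal term $\int_{\Omega_i}\langle \alpha(x,f_i^n,\nabla f_i^n) - \alpha(x,u_i^k,\nabla u_i^k), \nabla w\rangle$, a boundary term on $\partial\Omega_i$ carrying $b^n$, a lower-order term from $\gamma$, and the forcing $\int_{\Omega_i} h^n w$. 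Splitting the principal difference as $[\alpha(x,f_i^n,\nabla f_i^n) - \alpha(x,f_i^n,\nabla u_i^k)] + [\alpha(x,f_i^n,\nabla u_i^k) - \alpha(x,u_i^k,\nabla u_i^k)]$, the first bracket paired with $\nabla w$ is nonnegative by the monotonicity in the last item of Assumption \ref{assum2} and, together with the coercivity $\alpha(x,u,\tau)\tau \ge \nu|\tau|^2$, is bounded below by $\nu\|\nabla w\|_{L^2(\Omega_i)}^2$; the second bracket and the $\gamma$-difference are controlled by the local Lipschitz continuity of $\alpha$ and $\gamma$ in $(u_i,\nabla u_i)$, giving bounds of the form $L(\|w\|_{L^2} + \|\nabla w\|_{L^2})$. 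After Cauchy--Schwarz and Young's inequality to absorb the $\|\nabla w\|^2$ contributions into the coercive term, and a Poincar\'e/trace inequality to dominate $\|w\|_{L^2(\Omega_i)}^2$ by $\|\nabla w\|_{L^2(\Omega_i)}^2$ plus a multiple of $\|b^n\|_{2,\partial\Omega_i}^2$, I expect to reach $\|w\|_{L^2(\Omega_i)}^2 \le C_i\big(\|h^n\|_{2,\Omega_i}^2 + \|b^n\|_{2,\partial\Omega_i}^2\big)$ for a constant $C_i$ depending only on $\nu$, the Lipschitz constants, and the Poincar\'e constant of $\Omega_i$.

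With this estimate in hand, the conclusion follows by combining the two limits and summing. Given $\epsilon > 0$, \eqref{tozero} lets me pick $n$ so large that $\|h^n\|_{2,\Omega_i}^2 + \|b^n\|_{2,\partial\Omega_i}^2 \le \epsilon$ simultaneously on each of the finitely many subdomains, whence $\|f_i^n - u_i^k\|_{L^2(\Omega_i)}^2 \le C_i\epsilon$ uniformly in $k$; letting $k\to\infty$ kills the Schwarz term and identifies $f_i^n$ with the trained network $\mb{N}_i$, so $\lim_{k\to\infty} E_2(\mb{N}_i) \le C_i\epsilon$ on each subdomain (taking $E_2(\mb{N}_i) = \|\mb{N}_i - u_i^*\|_{L^2(\Omega_i)}^2$ matches the stated linear dependence on $\epsilon$). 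Summing over $i = 1,\dots,p$ and setting $K := \sum_{i=1}^p C_i$ then yields $\sum_{i=1}^p \lim_{k\to\infty} E_2(\mb{N}_i) \le K\epsilon$.

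I expect the main obstacle to be the energy estimate of the second paragraph, for two linked reasons. First, the quasilinear dependence of $\alpha$ and $\gamma$ on $u_i$ itself---not merely on $\nabla u_i$---means monotonicity of the principal part does not close the estimate on its own; the lower-order $\|w\|_{L^2}^2$ terms it generates must be absorbed, which forces either a smallness condition on the Poincar\'e constant relative to $\nu$ and the Lipschitz constants, or a Gr\"onwall-type argument, with the growth hypotheses of Assumption \ref{assum2} (here $\kappa\in L^2$, $\lambda\in L^{d+2+\eta}$) needed to keep these terms integrable. Second, the inhomogeneous boundary residual $b^n$ obstructs a direct use of the zero-trace Poincar\'e inequality, so a trace inequality must be invoked to convert $\|b^n\|_{2,\partial\Omega_i}$ into a usable bound. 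A subtler point is the legitimacy of decoupling the two limiting processes: the constant $C_i$ must be shown independent of $k$ so that the network error can be driven small \emph{before} sending $k\to\infty$; this hinges on the uniform-in-$k$ convergence of $\nabla u_i^k$ granted in Assumption \ref{assum2}, which keeps the Lipschitz and coercivity constants along the Schwarz sequence under uniform control.
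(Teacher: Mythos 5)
Your proposal diverges from the paper in two linked ways, and the second one opens a genuine gap.

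First, you and the paper are not bounding the same quantity. The paper's proof defines $E_2^k(\mb{N}_i)$ by the displayed formula
$\Vert\operatorname{div}(\alpha(x,u_i^*,\nabla u_i^*))+\gamma(x,u_i^*,\nabla u_i^*)-[\operatorname{div}(\alpha(x,\mb{N}_i,\nabla\mb{N}_i))+\gamma(x,\mb{N}_i,\nabla\mb{N}_i)]\Vert^2_{\Omega_i}+\Vert\mb{N}_i\Vert^2_{\partial\Omega_i}$,
i.e.\ the PDE \emph{residual} of the network plus the boundary mismatch --- the loss functional --- not $\|\mb{N}_i-u_i^*\|^2_{L^2(\Omega_i)}$. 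With that reading the paper's argument is comparatively soft: insert the Schwarz iterate $u_i^k$ by the triangle inequality; bound the network-versus-iterate piece by citing the universal-approximation result (Theorem 7.1 of the DGM reference) to get $K_2\epsilon$; bound the iterate-versus-exact piece using only the local Lipschitz continuity of $\alpha,\gamma$ together with the assumed uniform convergence $\Vert\nabla u_i^*-\nabla u_i^k\Vert_\infty\le\rho_1^k\Vert\nabla u_i^*-\nabla u_i^0\Vert_\infty$ via a H\"older-type computation; and bound the boundary term by Lemma \ref{l1}. No stability estimate for the PDE is ever invoked. You instead take $E_2(\mb{N}_i)=\|\mb{N}_i-u_i^*\|^2_{L^2}$ and try to convert residual smallness into solution error through an energy estimate; that is a strictly stronger claim than what the paper actually proves.

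Second, the energy estimate on which you place the entire burden does not close under Assumption \ref{assum2} alone, and you flag this without resolving it. The monotonicity hypothesis is only in $\tau$ at fixed $u$, so after your splitting the cross bracket $\alpha(x,f_i^n,\nabla u_i^k)-\alpha(x,u_i^k,\nabla u_i^k)$ paired with $\nabla w$ contributes a term of size $L\|w\|_{L^2}\|\nabla w\|_{L^2}$; after Young's inequality you are left needing $\nu\|\nabla w\|^2$ to dominate a multiple of $\|w\|^2$, and Poincar\'e only gives $\|w\|^2\le C_P\|\nabla w\|^2$, so the estimate closes only under a smallness relation between $\nu$, the Lipschitz constant and $C_P$ that appears nowhere in the hypotheses. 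For an elliptic problem there is no time variable over which to run Gr\"onwall, so that escape route is unavailable. This is precisely why the DGM reference obtains convergence of $f^n$ to the solution only strongly in $L^\rho$ for $\rho<2$, by a compactness argument and without a rate (their Theorem 7.3, which the paper's subsequent theorem invokes), rather than by the quantitative $L^2$ bound $\|w\|_{L^2}^2\le C_i(\|h^n\|_{2,\Omega_i}^2+\|b^n\|_{2,\partial\Omega_i}^2)$ you propose. To make your route rigorous you would have to either add a smallness or uniqueness-type hypothesis that closes the energy estimate, or fall back on compactness --- in which case the conclusion degrades to $L^\rho$, $\rho<2$, and no longer delivers the stated $K\epsilon$ bound for the $L^2$ error.
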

\begin{proof}
	In each subdomain $\Omega_i$, with iteration times $k>0$, $E_2^k(\mb{N}_i)$ denotes the $L^2$ loss between $\mb{N}_i^k$ and $u_i^*$.
	\begin{equation}
		\begin{aligned}
			\lim_{k\to\infty}E_2^k(\mb{N}_i)
			&=\Vert\operatorname{div}(\alpha(x, u^*_{i}( x), \nabla u^*_{i}(x)))+\gamma(x, u^*_{i}(x),\nabla u^*_{i}(x)) \\
			& - [\operatorname{div}(\alpha(x, \mb{N}_{i}( x), \nabla \mb{N}_{i}( x)))+\gamma( x, \mb{N}_{i}( x), \nabla \mb{N}_{i}( x))]\Vert^2_{\Omega_{i}}
			 + \Vert \mb{N}_i\Vert^2_{\partial\Omega_{i}}
		\end{aligned}
	\end{equation}
	With Lemma 1, it is clear that the sum of last term is smaller than $K_1\epsilon$, where $K_1>0$ is a constant. We assumed that $\nabla u^k_i$ uniformly converges to $\nabla u^*_i$ with $k$, and this means that
	\begin{equation}
		\Vert\nabla u^*_i-\nabla u^k_i\Vert_{\infty}\leq\rho_1^k\Vert\nabla u^*_i-\nabla u^0_i\Vert_{\infty}.
	\end{equation}
	So that we have
	\begin{equation}
	\footnotesize
		\begin{aligned}
			\lim_{k\to\infty}E_2^k(\mb{N}_i)
			& \leq \int_{\Omega_{i}}|\operatorname{div}(\alpha( x, u^*_{i}( x), \nabla u^*_{i}( x))) - \operatorname{div}(\alpha( x, u^k_{i}( x), \nabla u^k_{i}( x)))
			+ \operatorname{div}(\alpha( x, u^k_{i}( x), \nabla u^k_{i}( x))) - \operatorname{div}(\alpha( x, \mb{N}_{i}( x), \nabla \mb{N}_{i}( x)))|dx\\
			& + \int_{\Omega_{i}}|\gamma( x, u^*_{i}( x), \nabla u^*_{i}( x)) - \gamma( x, u^k_{i}( x), \nabla u^k_{i}( x))
			+ \gamma( x, u^k_{i}( x), \nabla u^k_{i}( x)) - \gamma( x, \mb{N}_{i}( x), \nabla \mb{N}_{i}( x)) |dx + K_1\epsilon\\
			& \leq \int_{\Omega_{i}}|\operatorname{div}(\alpha( x, u^*_{i}( x), \nabla u^*_{i}( x))) - \operatorname{div}(\alpha( x, u^k_{i}( x), \nabla u^k_{i}( x)))|dx
			+ \int_{\Omega_{i}}|\operatorname{div}(\alpha( x, u^k_{i}( x), \nabla u^k_{i}( x))) - \operatorname{div}(\alpha( x, \mb{N}_{i}( x), \nabla \mb{N}_{i}( x)))|dx\\
			& + \int_{\Omega_{i}}|\gamma( x, u^*_{i}( x), \nabla u^*_{i}( x)) - \gamma( x, u^k_{i}( x), \nabla u^k_{i}( x))|dx
			+ \int_{\Omega_{i}}|\gamma( x, u^k_{i}( x), \nabla u^k_{i}( x)) - \gamma( x, \mb{N}_{i}( x), \nabla \mb{N}_{i}( x)) |dx + K_1\epsilon\\
			& \leq \int_{\Omega_{i}}|\operatorname{div}(\alpha( x, u^*_{i}( x), \nabla u^*_{i}( x))) - \operatorname{div}(\alpha( x, u^k_{i}( x), \nabla u^k_{i}( x)))|dx
			+ \int_{\Omega_{i}}|\gamma( x, u^*_{i}( x), \nabla u^*_{i}( x)) - \gamma( x, u^k_{i}( x), \nabla u^k_{i}( x))|dx + K_2\epsilon + K_1\epsilon
		\end{aligned}
	\end{equation}
	where $K_2>0$ is a constant, and the error bound between $u_i^k$ and $\mb{N}_i^k$ is proved from Theorem 7.1 in \cite{sirignano2018dgm}.
	
	\begin{equation}\label{div}
		\begin{aligned}
			&\lim_{k\to\infty}\int_{\Omega_{i}}|\operatorname{div}(\alpha( x, u^*_{i}( x), \nabla u^*_{i}( x))) - \operatorname{div}(\alpha( x, u^k_{i}( x), \nabla u^k_{i}( x)))|^2dx\\
			&\leq\int_{\Omega_{i}}(|u^k_i|^{q_1}+|\nabla u^k_i|)|^{q_2}+|u^*_i|^{q_3}+|\nabla u^*_i|)|^{q_4})\times(|u^*_i-u^k_i|+|\nabla u^*_i-\nabla u^k_i|)dx\\
			&\leq (\int_{\Omega_i}(|u^k_i-u^*_i|^{q_1}+|\nabla u^k_i-\nabla u^*_i|^{q_2}+|u^*_i|^{\max\{q_1,q_3\}}+|\nabla u^*_i|^{\max\{q_2,q_4\}})^r dx)^{1/r}\times(\int_{\Omega_i}\rho^k\Vert u^*_i-u^0_i\Vert_{\infty}+\rho^k_1\Vert \nabla u^*_i-\nabla u^0_i\Vert_{\infty}dx)\\
			&\leq (\int_{\Omega_{i}}\Vert u^k_i-u^*_i\Vert^{q_1}_{\infty}+\Vert \nabla u^k_i-\nabla u^*_i\Vert^{q_2}_{\infty}+\sup_{\Omega_i}|u^*_i|^{\max\{q_1,q_3\}}+\sup_{\Omega_i}|\nabla u^*_i|^{\max\{q_2,q_4\}}dx)\epsilon \\
			&\leq K_3 \epsilon
		\end{aligned}
	\end{equation}
	where $K_3>0$ is a constant depends on $\epsilon$. While $\gamma(\cdot)$ is also Lipschitz continuous, we can prove the upper bound of $\int_{\Omega_{i}}|\gamma( x, u^*_{i}( x), \nabla u^*_{i}( x)) - \gamma( x, u^k_{i}( x), \nabla u^k_{i}( x))|dx$ in the same formula with Equation \eqref{div}, denoted as $K_4\epsilon$.
	Hence we can obtain
	\begin{equation}
		\begin{aligned}
			\sum_{i=1}^p \lim_{k\to\infty}E_2(\mb{N}_i) 
			&\leq \sum_{i=1}^{p} (K_1\epsilon + K_2\epsilon + K_3\epsilon + K_4\epsilon) 
			\leq {\lt K\epsilon}
		\end{aligned}
	\end{equation}
\end{proof}

\begin{mtheorem}
Under Assumption \ref{assum2} and Equation \eqref{tozero}, with iteration times $k \to \infty$, the set of neural networks $\mb{N}_i$ converge to the unique solutions to \eqref{quasi}, strongly in $\mathcal{L}^{\rho}(\Omega_{i})$ for every $\rho<2$. In addition, in each subdomain the sequence $\{\mb{N}_i^n(x)\}_{n\in\mathbb{N}}$ is bounded in n under the constraint of $\mathrm{Proposition}$ \ref{pro2} and converges to $u_i$.
\end{mtheorem}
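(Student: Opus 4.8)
The plan is to work one subdomain at a time and to combine the local results at the end through Proposition \ref{pro2}. Fix $\Omega_i$. The first step is to derive a uniform a priori bound on the network sequence $\{\mb{N}_i^n\}_n$ in $H^1(\Omega_i)$. Starting from the approximate equation satisfied by $\mb{N}_i^n$ with interior residual $h^n$ and boundary residual $b^n$, I would test the weak form against $\mb{N}_i^n$ itself, integrate the divergence term by parts, and invoke the coercivity bound $\alpha(x,u,\tau)\tau\geq\nu|\tau|^2$ from Assumption \ref{assum2}. The growth bounds $\|\alpha\|\leq\mu(\kappa+\|\tau\|)$ and $|\gamma|\leq\lambda\|\tau\|$, together with the integrability $\kappa\in\mathcal{L}^2(\Omega_i)$ and $\lambda\in\mathcal{L}^{d+2+\eta}(\Omega_i)$, then let me absorb the lower-order and forcing contributions by Young's and H\"older's inequalities, while \eqref{tozero} controls the residual terms. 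This yields $\sup_n\|\nabla\mb{N}_i^n\|_{\mathcal{L}^2(\Omega_i)}<\infty$, and since each $\Omega_i$ is compact and $b^n\to 0$, a uniform $H^1(\Omega_i)$ bound, which is exactly the boundedness assertion of the theorem.

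Given the uniform $H^1$ bound, reflexivity of $H^1(\Omega_i)$ produces a subsequence $\mb{N}_i^{n_k}\rightharpoonup\bar u_i$ weakly in $H^1(\Omega_i)$. By the Rellich--Kondrachov theorem the embedding $H^1(\Omega_i)\hookrightarrow\mathcal{L}^2(\Omega_i)$ is compact on the bounded subdomain, so the convergence $\mb{N}_i^{n_k}\to\bar u_i$ is strong in $\mathcal{L}^2(\Omega_i)$ and hence, by boundedness of $\Omega_i$, strong in $\mathcal{L}^\rho(\Omega_i)$ for every $\rho<2$. What remains is to identify $\bar u_i$ as a weak solution of \eqref{quasi}, i.e.\ $\bar u_i=u_i$.

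The central step is passing to the limit in the nonlinear flux. Weak $H^1$ convergence of the gradients does not by itself determine the limit of $\alpha(x,\mb{N}_i^{n_k},\nabla\mb{N}_i^{n_k})$, so I would invoke the Minty--Browder monotonicity device. Concretely, I would write the weak form against an arbitrary smooth test function, use the strong $\mathcal{L}^\rho$ convergence together with the Lipschitz continuity of $\alpha$ and $\gamma$ in Assumption \ref{assum2} to pass the zeroth-order arguments $\mb{N}_i^{n_k}$ to the limit, and isolate the term in $\nabla\mb{N}_i^{n_k}$. The strict monotonicity $\langle\alpha(x,u,\tau_1)-\alpha(x,u,\tau_2),\tau_1-\tau_2\rangle>0$ is precisely the structure needed: testing it with $\tau_1=\nabla\mb{N}_i^{n_k}$ and $\tau_2=\nabla w$ for arbitrary $w$, integrating, and sending $k\to\infty$ identifies the weak limit of the flux as $\alpha(x,\bar u_i,\nabla\bar u_i)$. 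With \eqref{tozero} forcing the residuals to vanish, this shows $\bar u_i$ solves \eqref{quasi} weakly; by the assumed uniqueness of the solution in each subdomain, $\bar u_i=u_i$, and since the limit is independent of the extracted subsequence, the whole sequence converges, giving the claimed strong $\mathcal{L}^\rho$ convergence $\mb{N}_i^n\to u_i$.

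Finally, summing over $i=1,\dots,p$ and invoking Proposition \ref{pro2}, whose Schwarz contraction with factor $\rho\in(0,1)$ and the derivative-convergence hypothesis $\|\nabla u_i^*-\nabla u_i^k\|_\infty\leq\rho_1^k\|\nabla u_i^*-\nabla u_i^0\|_\infty$ control the interface iterates, yields the bounded, globally convergent family $\{\mb{N}_i\}$ across the whole decomposition. The main obstacle is the third step: transporting the quasilinear operator $\alpha(x,\cdot,\cdot)$ through a merely weak limit. Weak $H^1$ convergence is genuinely insufficient for nonlinear operators, so the argument hinges on exploiting the coercivity and strict monotonicity of Assumption \ref{assum2} through Minty--Browder; by comparison the energy estimate and the compact embedding are routine once the growth and integrability conditions are in hand.
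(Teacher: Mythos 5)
Your proposal is correct and follows essentially the same route as the paper: the paper's proof simply defers to Theorems 7.1 and 7.3 of \cite{sirignano2018dgm}, whose arguments are exactly the coercivity-based energy estimate, a compactness step, and the Minty--Browder monotonicity device that you spell out, combined with Proposition \ref{pro2} for the Schwarz iteration across subdomains. The one minor mismatch is that the stated restriction to strong convergence in $\mathcal{L}^{\rho}(\Omega_{i})$ only for $\rho<2$ reflects an argument via almost-everywhere convergence plus uniform integrability from an $\mathcal{L}^{2}$ bound (Vitali), rather than full Rellich--Kondrachov compactness from a uniform $H^{1}$ bound as you invoke; this does not affect correctness.
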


\begin{proof}
	In each subdomain, the convergence can be obtained from the Theorems 7.1 and 7.3 in \cite{sirignano2018dgm}. With the Proposition \ref{pro2}, the sequence $\{\mb{N}_{i}^{n}\}_{n\in\mathbb{N}}$ is uniform bounded in n, and the rates of convergence to the solution $u^*$ are related to overlapping areas.
\end{proof}

Specially, the $n$ in $\{\mb{N}_i^n(x)\}_{n\in\mathbb{N}}$ means training times instead of iteration times.We leave the proof for time-dependent data-free variational formulations for future work.
\section{D3M summary}\label{d3m_summary}
To summarize the strategies in Section \ref{d3m} and \ref{analysis}, the full procedure of D3M comprises the following steps:
\begin{itemize}
	\item pre-step: Set the architecture of neural networks in each subdomain;
	
	\item offline step: Construct functions for boundary conditions, train networks and generate local solutions; 
	
	\item online step: Estimate target input data using neural networks. If solutions don't converge, transfer information on interfaces and go back to the offline step.
\end{itemize}

The pre-step is cheap, because the setting of neural networks is an easy task. In the offline stage, the complex system is fully decomposed into component parts, which means that there is no data exchange between subdomains. Since we only approximate the data on interface with normal simple approach such as Fourier series and polynomial chaos \cite{chen2015local,arngha11a}, the approximation is also low costly. After decomposition, the requirement for number of samples for the Monte Carlo integration of  the residual loss function \eqref{g_L_dd} is significantly reduced, while the density of samples does not change. Since the number of samples decreasing and the domain becoming simpler, we can use neural networks with few layers to achieve a {\ls relatively high} accuracy. If we keep the same number of samples and layers as the global setting, D3M should obtain a better accuracy. The cost of the online step is low, since no PDE solve is required. This full approach is also summarized in Figure \ref{fig_full_approach}, where transformations of data happen between adjacent subdomains.
\vspace{1cm}
\begin{figure}
	\centering
	\includegraphics[width=0.8\linewidth]{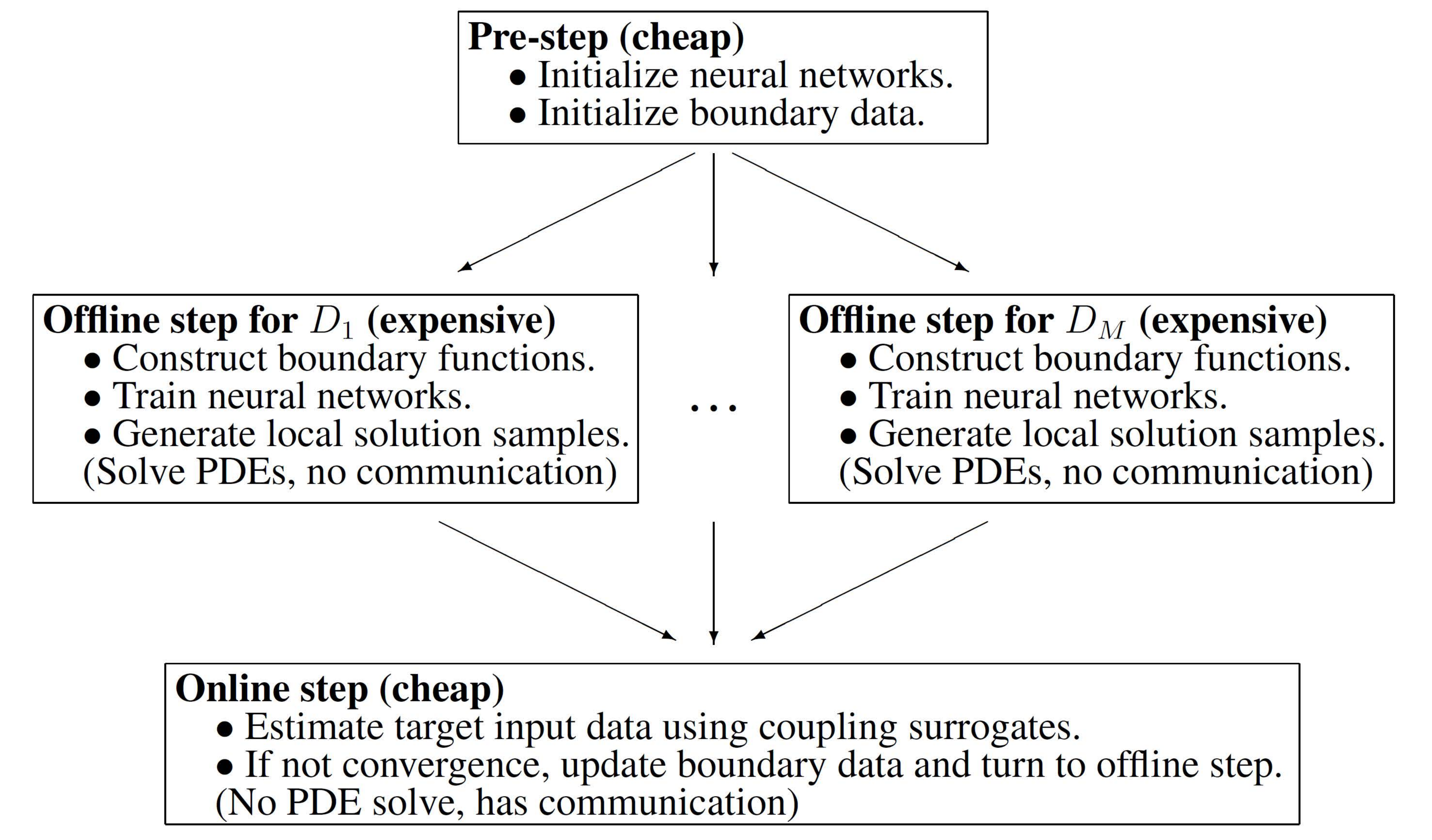}
	\caption{D3M summary.}
	\label{fig_full_approach}
\end{figure}

For the problems we focus on (systems governed by PDEs), the cost of the D3M is
dominated by the local training procedures in the offline step. Here we present a rough order of magnitude analysis of the computational costs where $C_{\textrm{solve}}$ denotes the cost of one block in each training epoch (i.e., the cost of any block with the same number of neurons in each training iteration is taken to be equal for simplicity). The dominant cost of D3M is the total number of blocks of neural network and training times, $\sum^{p}_{i=1}N_{i}B_{i}T_{i}C_{\textrm{solve}}$, where $N_{i}$ is sample size, $B_{i}$ is number of blocks and $T_{i}$ is training times. If we consider equal offline sample sizes, number of blocks and training epochs, $N_i=N_{\rm{off}}, B_i=B_{\rm{off}}, T_i=T_{\rm{off}}$ for all subdomains $\{D_i\}^p_{i=1}$, then total cost can be written as $p N_{\rm{off}}B_{\rm{off}}T_{\rm{off}}C_{\textrm{solve}}$. The total cost is decreased by employing the idea of hierarchical neural networks \cite{ng2014multifidelity,li2019hnh}. 


\section{Numerical tests}\label{tests}
Here we consider two classical problems, the Poisson's equation and
the  time-independent Schrödinger equation, to verify the performance of our D3M. All
timings conduct on an Intel Core i5-7500, 16GB RAM, Nvidia GTX 1080Ti processor with PyTorch 1.0.1 \cite{paszke2017automatic} under Python 3.6.5. We train the networks only 30 epoch using L-BFGS in numerical tests (cost within two minutes).
\subsection{Poisson's equation}
\begin{equation}
\left\{
\begin{aligned}
    -\Delta u(x,y) &= 1, \ \ &\mathrm{in} \ \Omega,\\
    u(x,y) &= 0, \ \ &\mathrm{on} \ \partial\Omega,\\
\end{aligned}
\right.
\end{equation}
where the physical domain {\ls is $\Omega=(-1,1)\times(-1,1)$}. 
{\ls The domain decomposition setting is illustrated in Figure  \ref{dd}.}
To further improve the efficiency of D3M, we propose a new type of sampling methods. 
We randomly sample in each subdomain,
and the number of samples increases with iteration times increase. The {\ls sample} size on interfaces 
remains the same to provide an accurate solution for data exchange. An illustration of our  D3M sampling method is represented in Figure \ref{d3msampling}
\begin{figure}[H]
	\centering
	\includegraphics[width=0.4\linewidth]{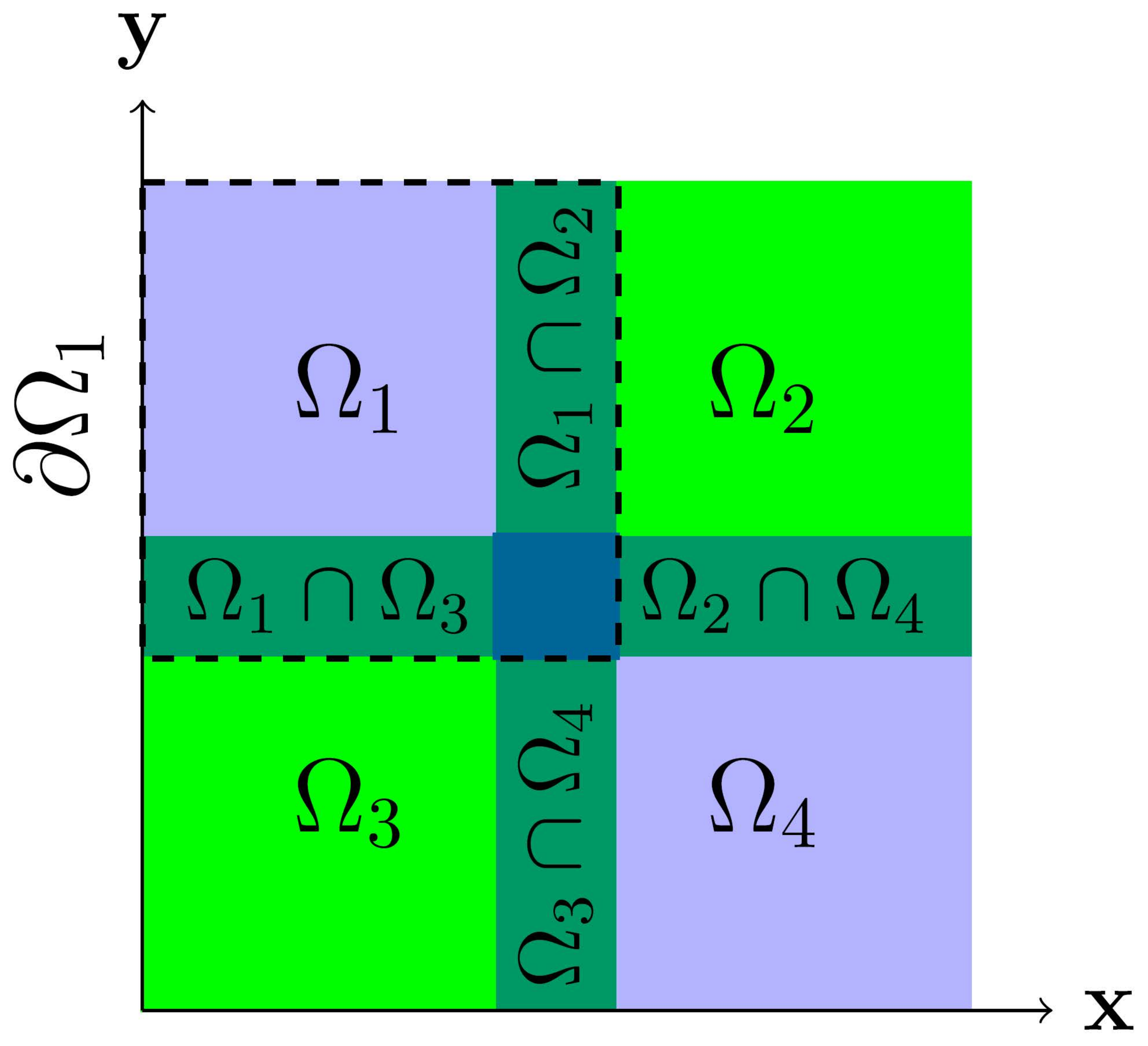}
	\caption{Illustrations of the physical domain with four overlapping components.}
	\label{dd}
\end{figure}

\begin{figure}
	\centering
	\subfigure[]{
		\centering
		\includegraphics[width=0.35\linewidth]{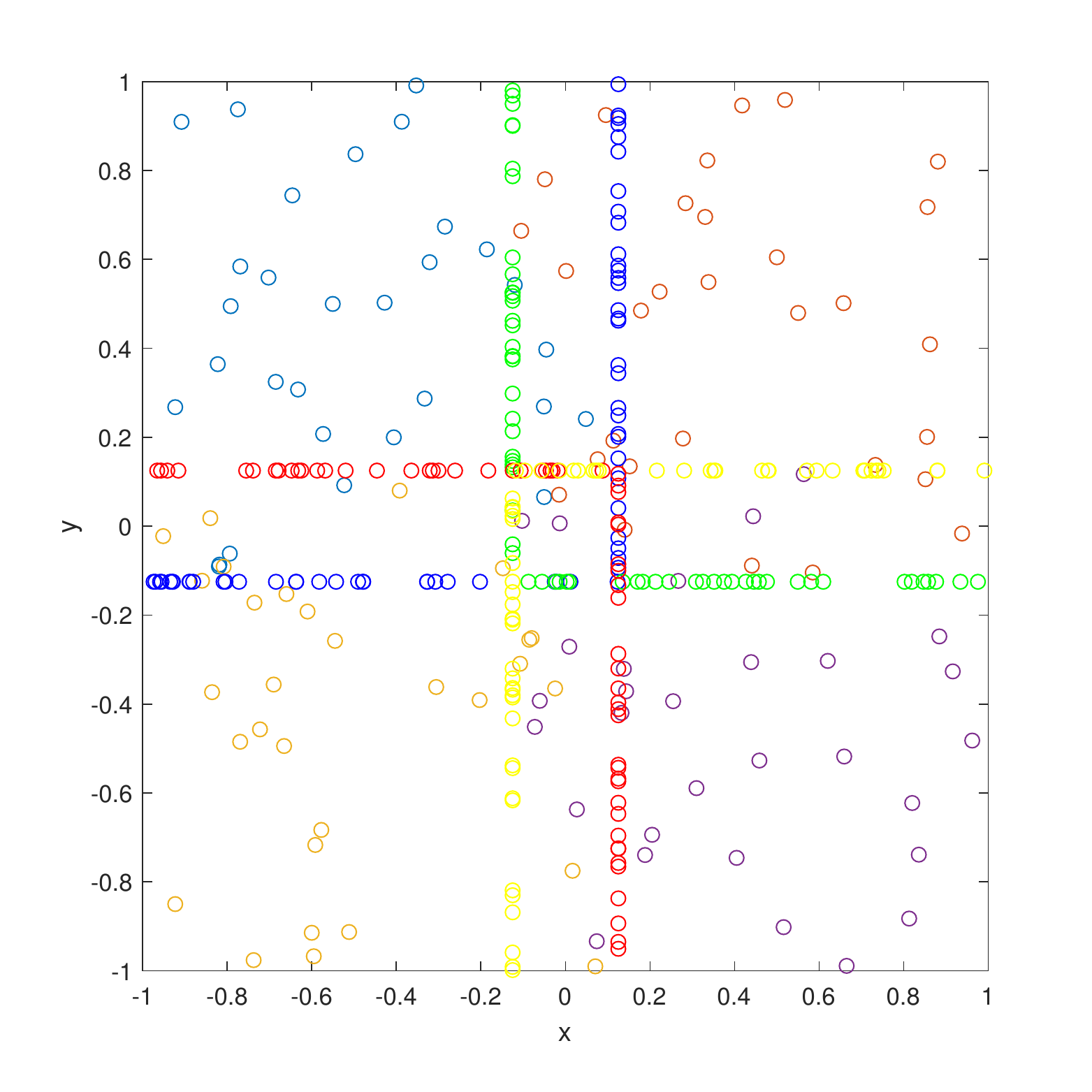}}
	\subfigure[]{
		\centering
		\includegraphics[width=0.35\linewidth]{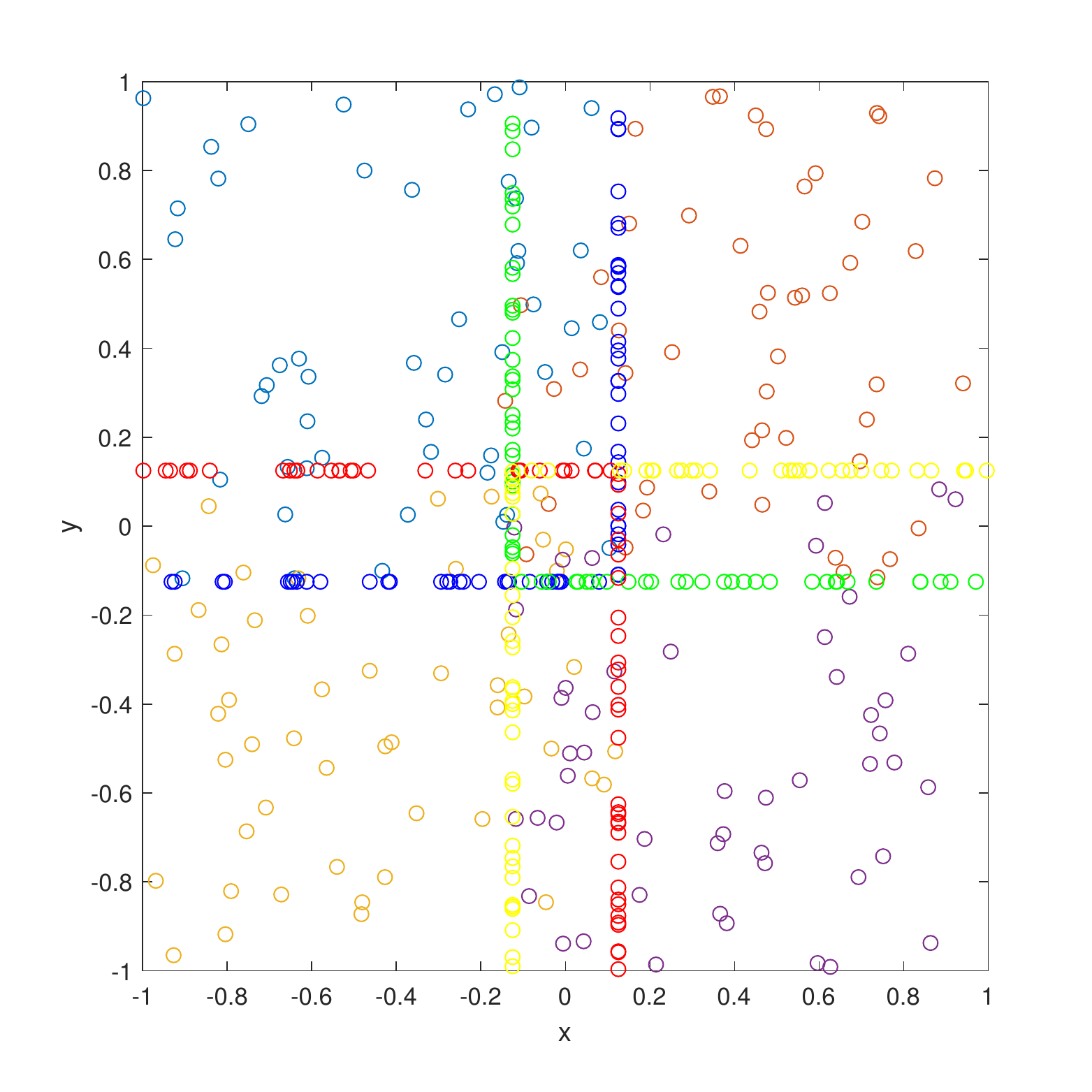}}
	\subfigure[]{
		\centering
		\includegraphics[width=0.35\linewidth]{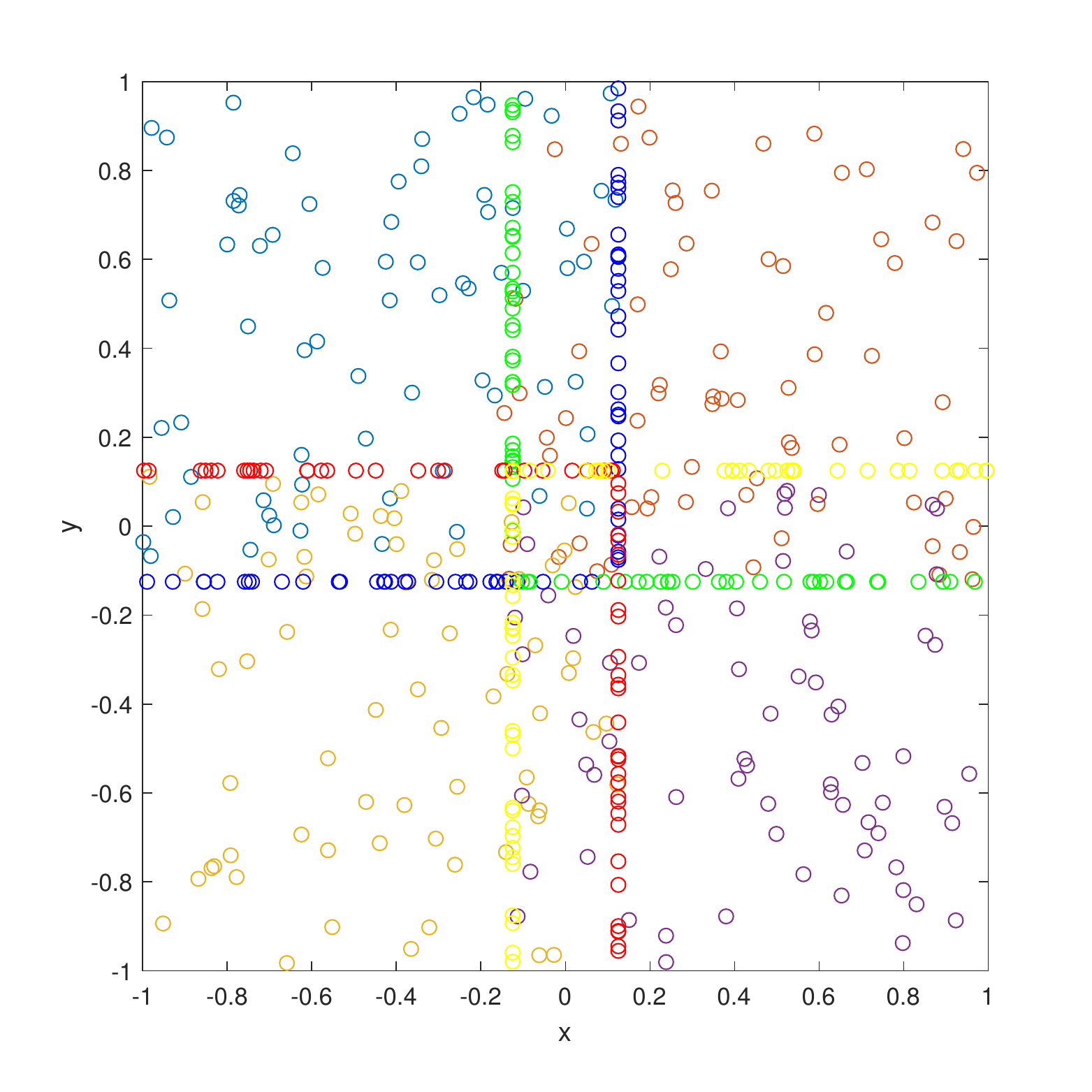}}
	\subfigure[]{
		\centering
		\includegraphics[width=0.35\linewidth]{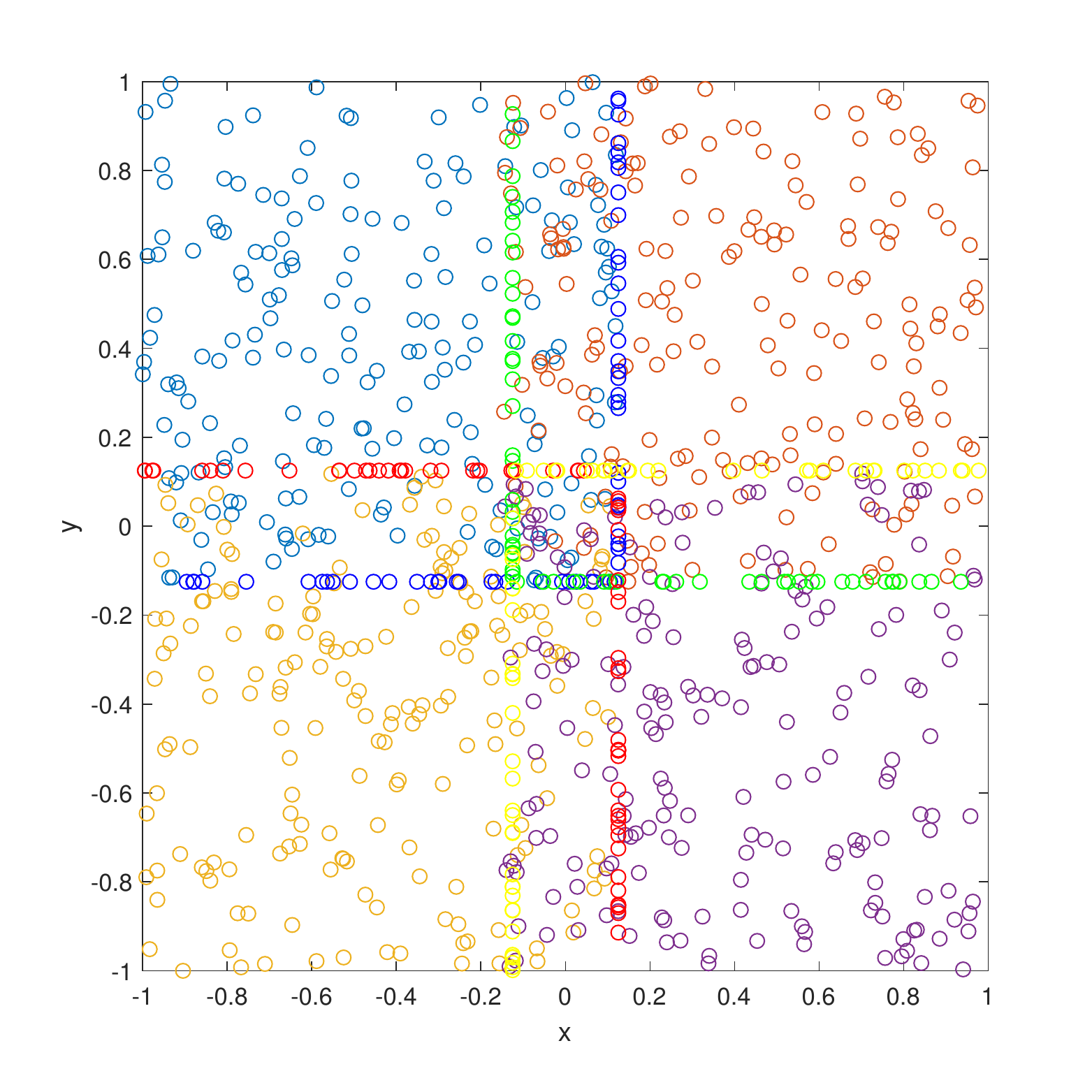}}

\caption{D3M sampling : a new type of mesh-free sampling.}
\label{d3msampling}
\end{figure}

\begin{mremark}
	According to the research on overfitting, the hypothesis set (w.r.t. the complexity of neural networks) should match the quantity and quality of data instead of target functions \cite{abu2012learning}. So in the initial several iterations, the number of residual blocks is small. The number increases while the sample size in Figure \ref{d3msampling} {\ls increases}.
\end{mremark}

After decomposition, with designed $\mathfrak{g}_i$ satisfying Definition 1 for each subdomains, the function $v_i:= u_i-\mathfrak{g}_i$ satisfies the homogeneous Poisson's equation and follows \eqref{decom1} and \eqref{decom2}.

\begin{figure}
	\centering
	\subfigure[Solution of D3M]{
		\centering
		\includegraphics[width=0.4\linewidth]{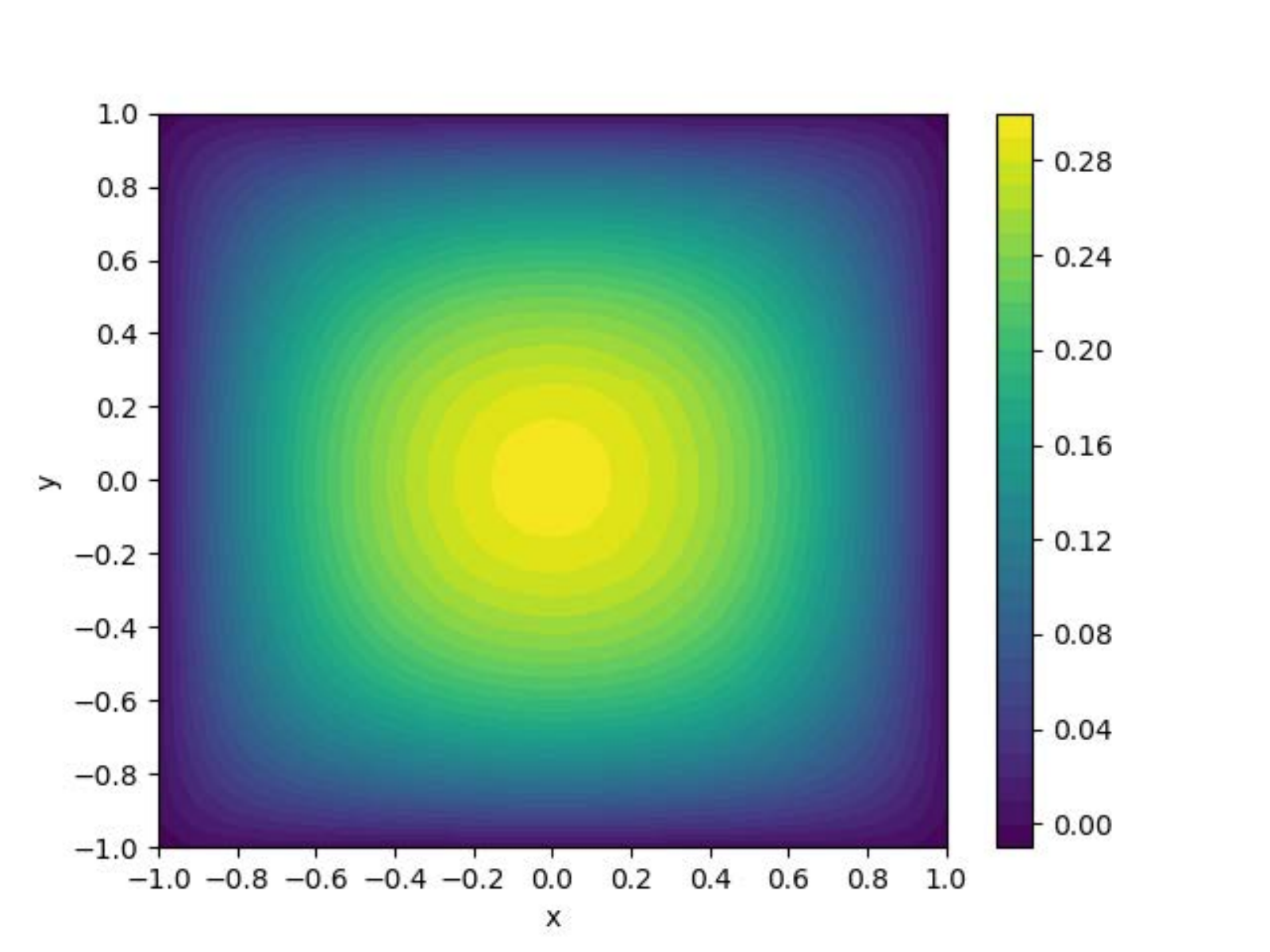}
	\label{1.1}}
	\subfigure[Solution of DRM]{
		\centering
		\includegraphics[width=0.4\linewidth]{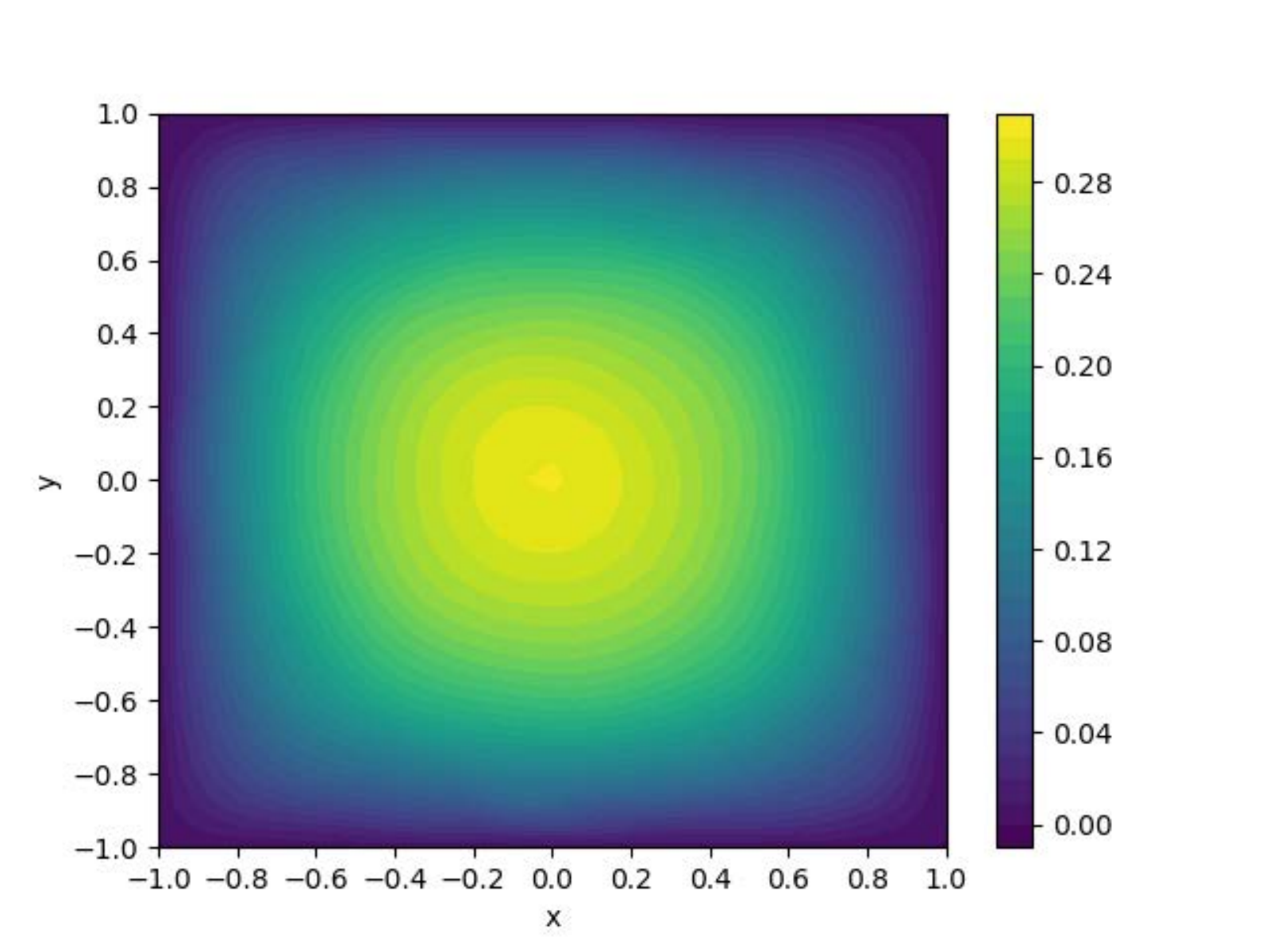}
	\label{1.2}}
	\subfigure[Solution of FEM]{
		\centering
		\includegraphics[width=0.4\linewidth]{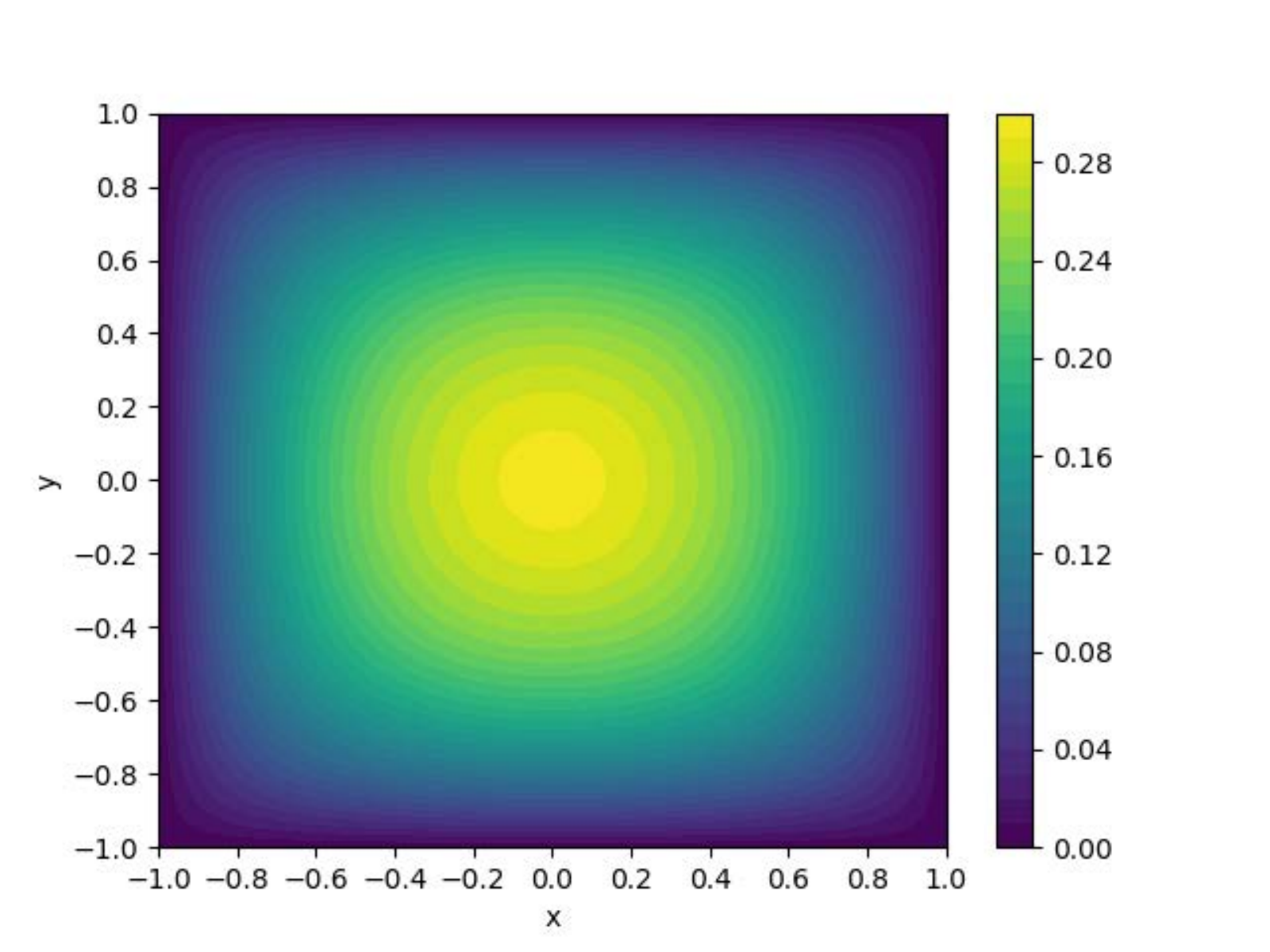}
	\label{1.3}}
	\caption{Solutions computed by three different methods.}
	\label{comp}
\end{figure}

The results of D3M is shown in \ref{1.1}. For comparison, we plot the result of normal deep Ritz method (DRM) with the same type of network and the finite element method (FEM) in Figure \ref{1.2} and \ref{1.3}. We set the result of FEM as the groundtruth and define the relative error $e_r=\frac{\norm{sol-fem_{sol}}{2}}{\norm{fem_{sol}}{2}}$. We compare results using residual network (ResNet), and the comparison including relative errors are shown in Table \ref{table1}. We can see that with the same setting for networks, our D3M offers a higher accuracy than normal DRM in this experiment.

\begin{table}[H]
	\centering
	\begin{threeparttable}
		\caption{The relative error for Poisson's equation.}
		\label{table1}
		\begin{tabular}{ccccc}
			\hline
			Method\ &Net type \ &Blocks \ &Number of neurons\ &Relative error\\
			\hline
			DRM&ResNet&4&2048&0.0271\\
			DRM&ResNet&8&4096&0.0157\\
			D3M&ResNet&4&2048&0.0065\\
			D3M&ResNet&8&4096&0.0045\\
			\hline
		\end{tabular}
	\end{threeparttable}
\end{table}

\subsection{Schrödinger equation}
In the area of domain decomposition methods, the steady-state Schrödinger equation is one of the classical problems \cite{toselli2006domain,hagstrom1988numerical}.
\begin{equation}
    \left[\frac{-\bar{h}^{2}}{2 m} \nabla^{2}+V(\mathbf{r})\right] \Psi(\mathbf{r})=E \Psi(\mathbf{r}).
\end{equation}
Where $\bar{h}=\frac{h}{2 \pi}$ is the reduced Planck constant, $m$ is the particle's mass and $E$ is a known constant related to the energy level. This equation occurs often in quantum mechanics where $V(\mathbf{r})$ is the function for potential energy. Here we consider an infinite potential well
\begin{equation}
    V(\mathbf{r})=\left\{\begin{array}{cc}{0,} \quad &{\mathbf{r} \in[0,1]^{d}} \\ {\infty,} \quad &{\mathbf{r} \notin[0,1]^{d}}.\end{array}\right.
\end{equation}
The variational loss is
\begin{equation}
\begin{aligned}
    &L(\tau_i,\mb{N}_i,q) = \int_{\Omega_i}[\Delta\mb{N}_i(r)) + \Delta\mathfrak{g}_i - E\mb{N}_i(r)) + E\mathfrak{g}_i]dr \\
    &+ q\int_{\partial\Omega_i}(\mb{N}_i(r))^2 dr + \gamma(\int_{\Omega_i}|\mb{N}_i(r)|^2dr-1+P_i)^2,
\end{aligned}
\end{equation}
where $\int_{\Omega}|\Psi(r)|^2dr=1$, because $\Psi(r)$ is {\ls the} wave function and $\Psi(r)^2$ means 
{\ls the} probability density of particle appearing. $P_i$ denotes the probability of particle appearing in $\Omega \backslash \Omega_i$. 
{\ls It should be noted that $\mb{N}_i$ is an approximation of $\Psi(\mathbf{r})_i-g_i$,
where  $\Psi(\mathbf{r})_i$ is the global solution $\Psi(\mathbf{r})$ restricted on the subdomain $\Omega_i$
and $g_i$ is the boundary function for the interface of $\Omega_i$.}

For this two dimensional time-independent Schrödinger equation, we can calculate the analytical solution $\Psi(r)=A\sin(\frac{m\pi r_1}{b})C\sin(\frac{n\pi r_2}{a})$, where $A=C=\sqrt{2}$, $a=b=1$ and $n=m=1$ in this infinite potential well case with domain $[0,1]\times[0,1]$. As shown in Figure \ref{scho_fig}, excellent agreement can be achieved between the exact solutions and predictions from our D3M. Compared with the solutions of DRM, D3M shows a better performance especially in peak values. The comparison of accuracy for both the wave equation and the probability density is shown in Table \ref{table2} and Figure \ref{schro_comp}. Under the same conditions including network structure, Lagrangian multiplier, learning rate, number of samples and training epoch, our D3M shows smaller errors in both wave equation and probability density.

\begin{figure}
	\centering
	\subfigure[Exact solution of $\Phi(r)$]{
		\centering
		\includegraphics[width=0.4\linewidth]{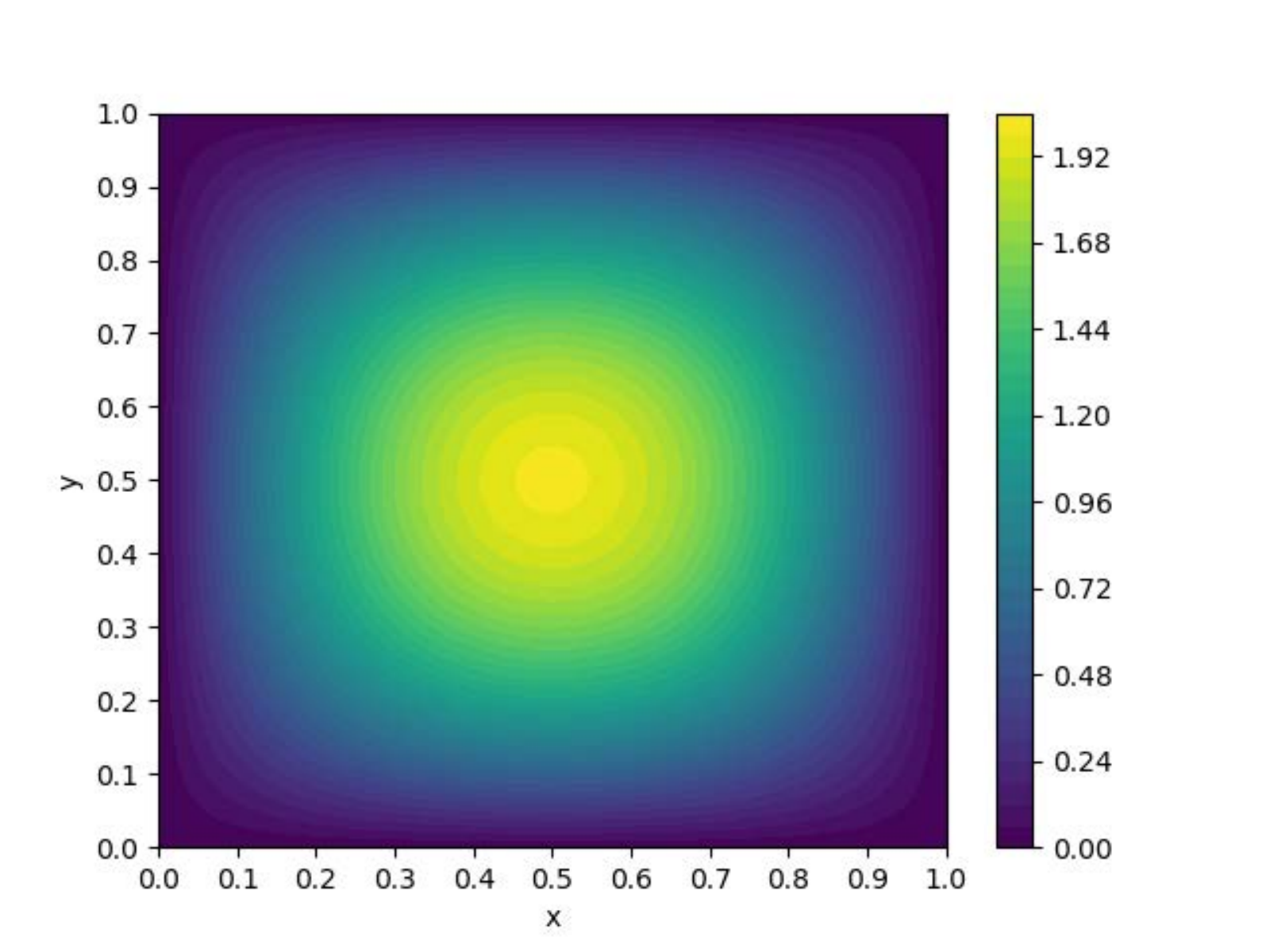}}
	\subfigure[Exact solution of $\Phi^2(r)$]{
		\centering
		\includegraphics[width=0.4\linewidth]{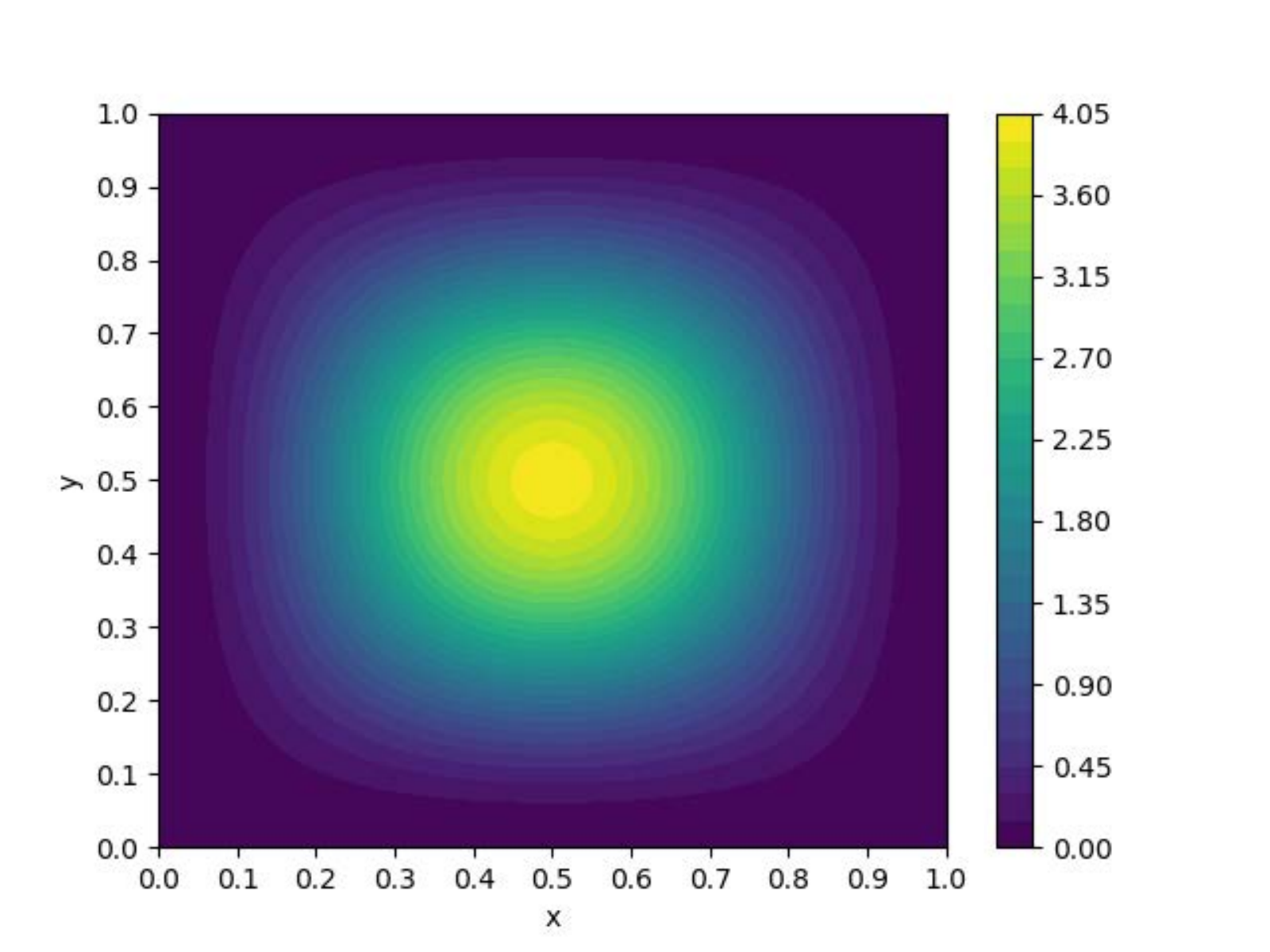}}
	\subfigure[D3M solution of $\Phi(r)$]{
		\centering
		\includegraphics[width=0.4\linewidth]{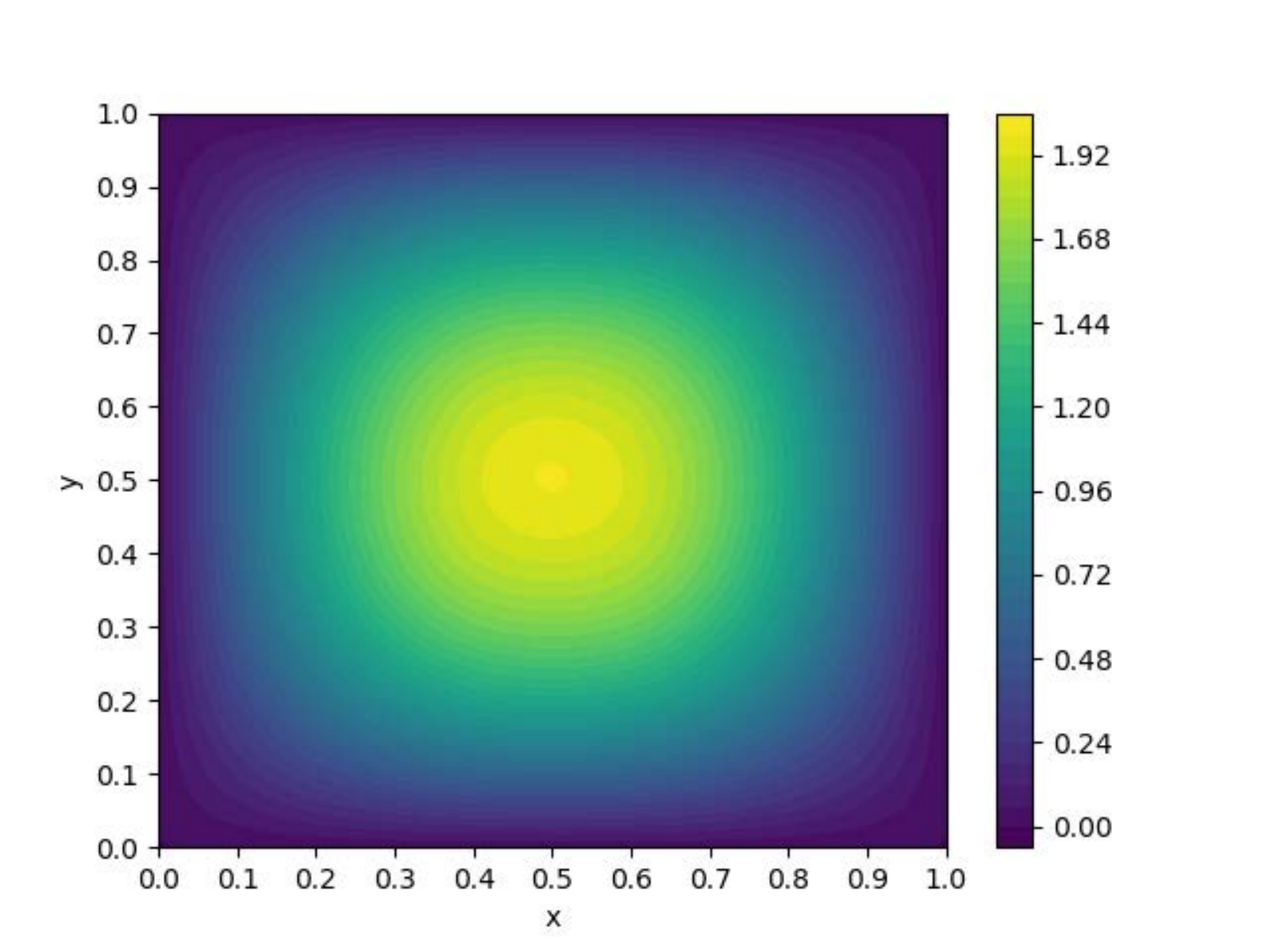}}
	\subfigure[D3M solution of $\Phi^2(r)$]{
		\centering
		\includegraphics[width=0.4\linewidth]{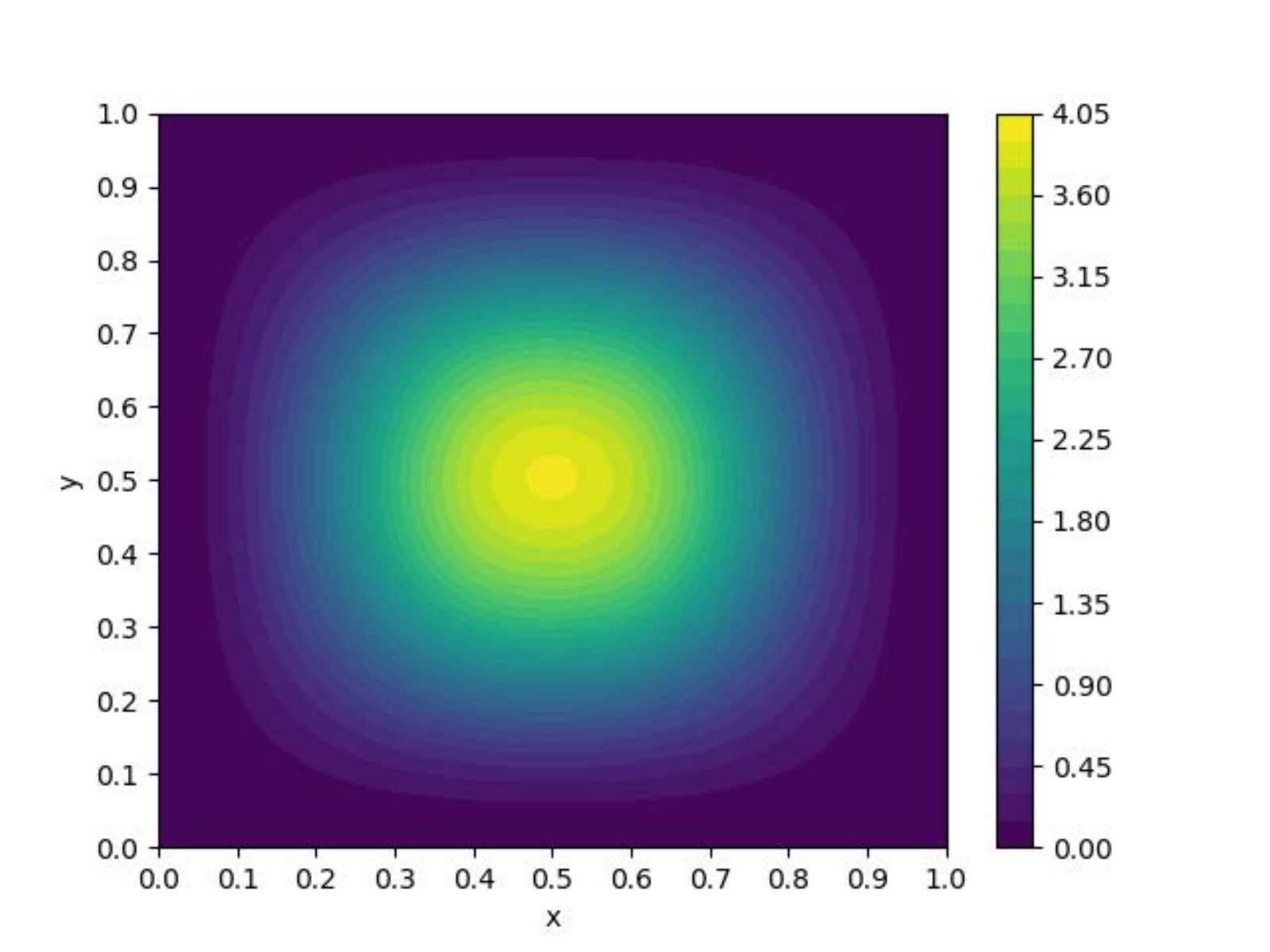}}
	\centering
	\subfigure[DRM solution of $\Phi(r)$]{
		\centering
		\includegraphics[width=0.4\linewidth]{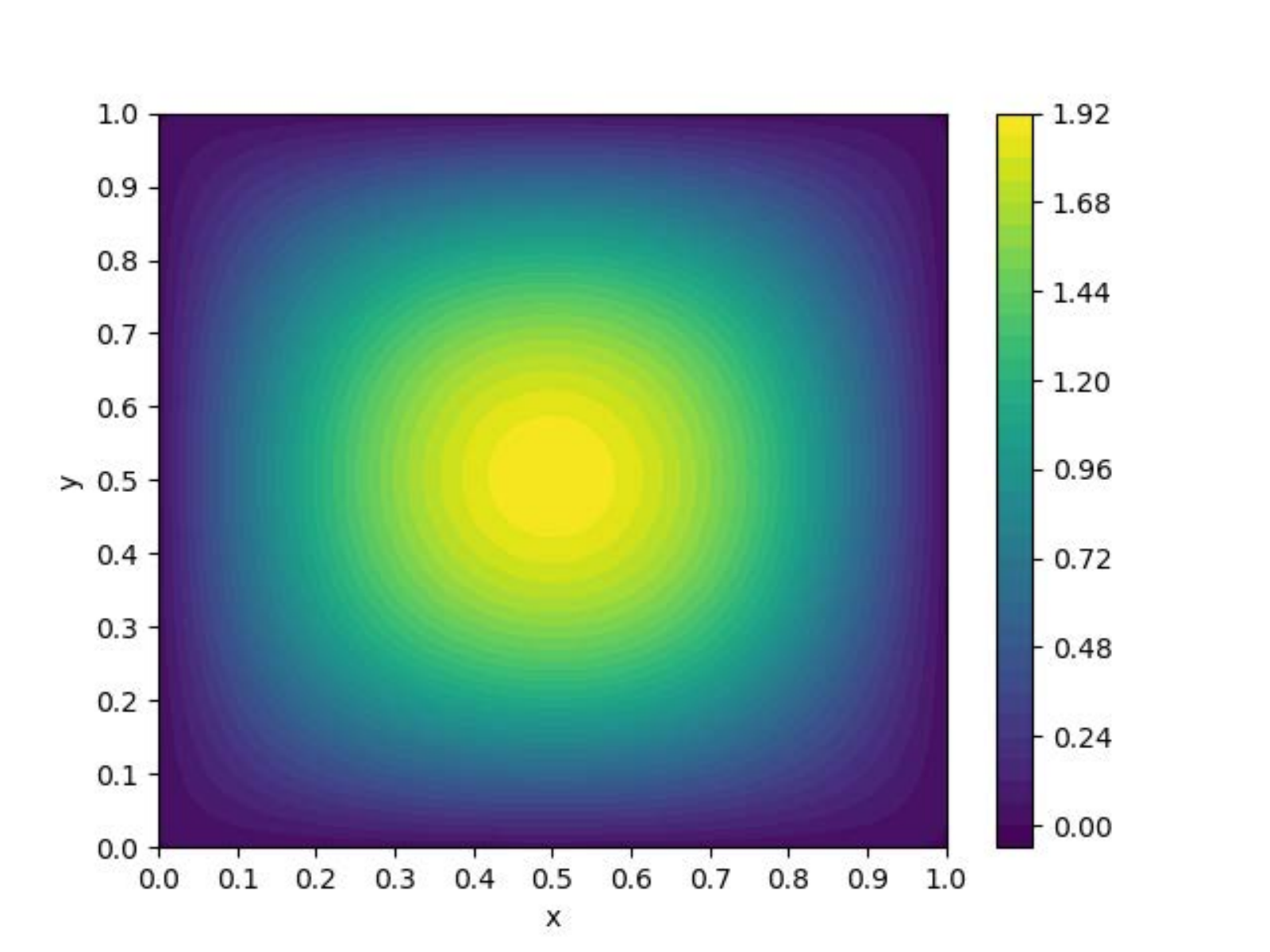}}
	\subfigure[DRM solution of $\Phi^2(r)$]{
		\centering
		\includegraphics[width=0.4\linewidth]{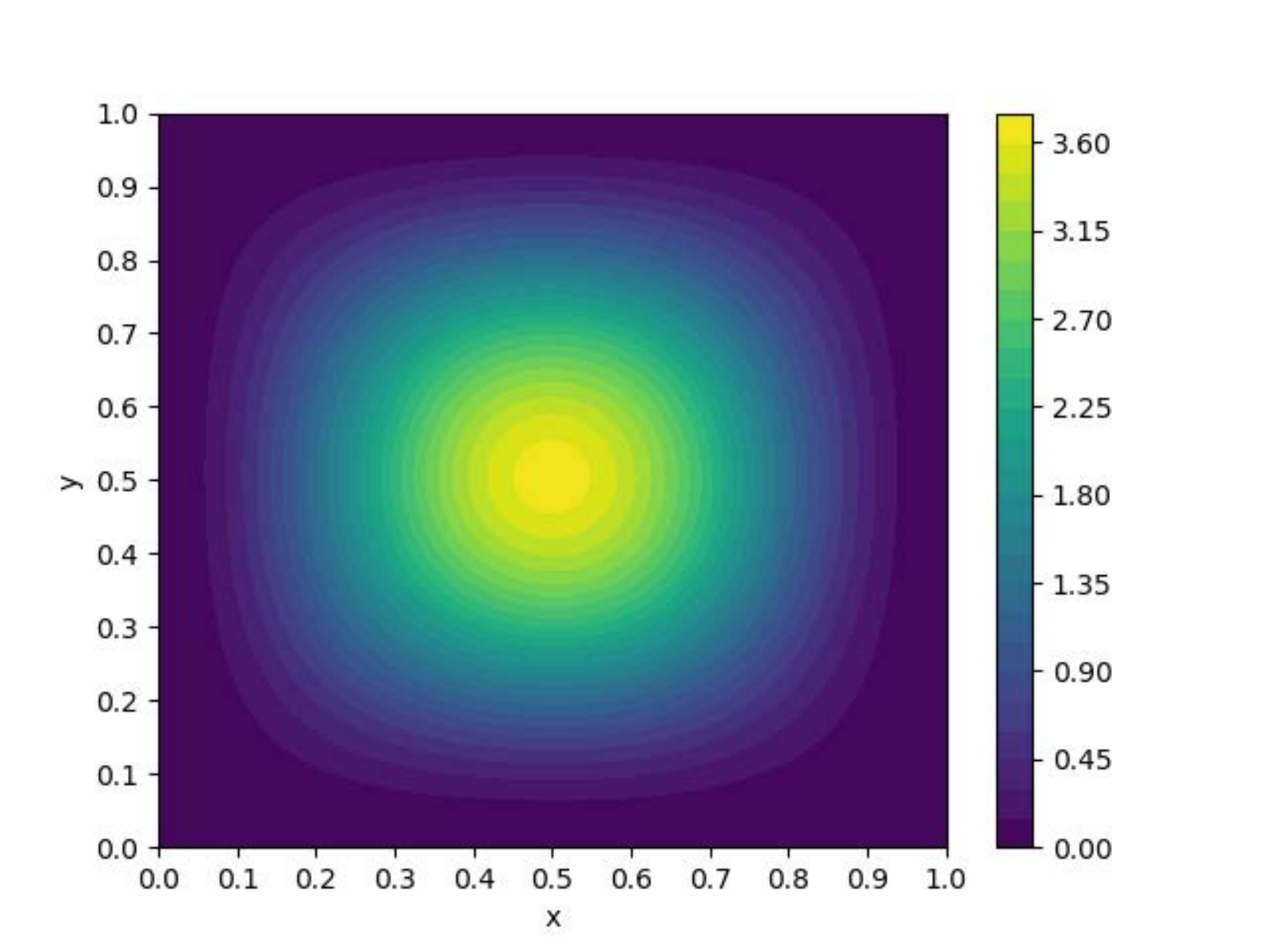}}
	\caption{The solutions of wave function and probability density for a time-independent Schrödinger equation.}
	\label{scho_fig}
\end{figure}

\begin{figure}[H]
	\centering
	\subfigure[]{
		\includegraphics[width=0.47\textwidth]{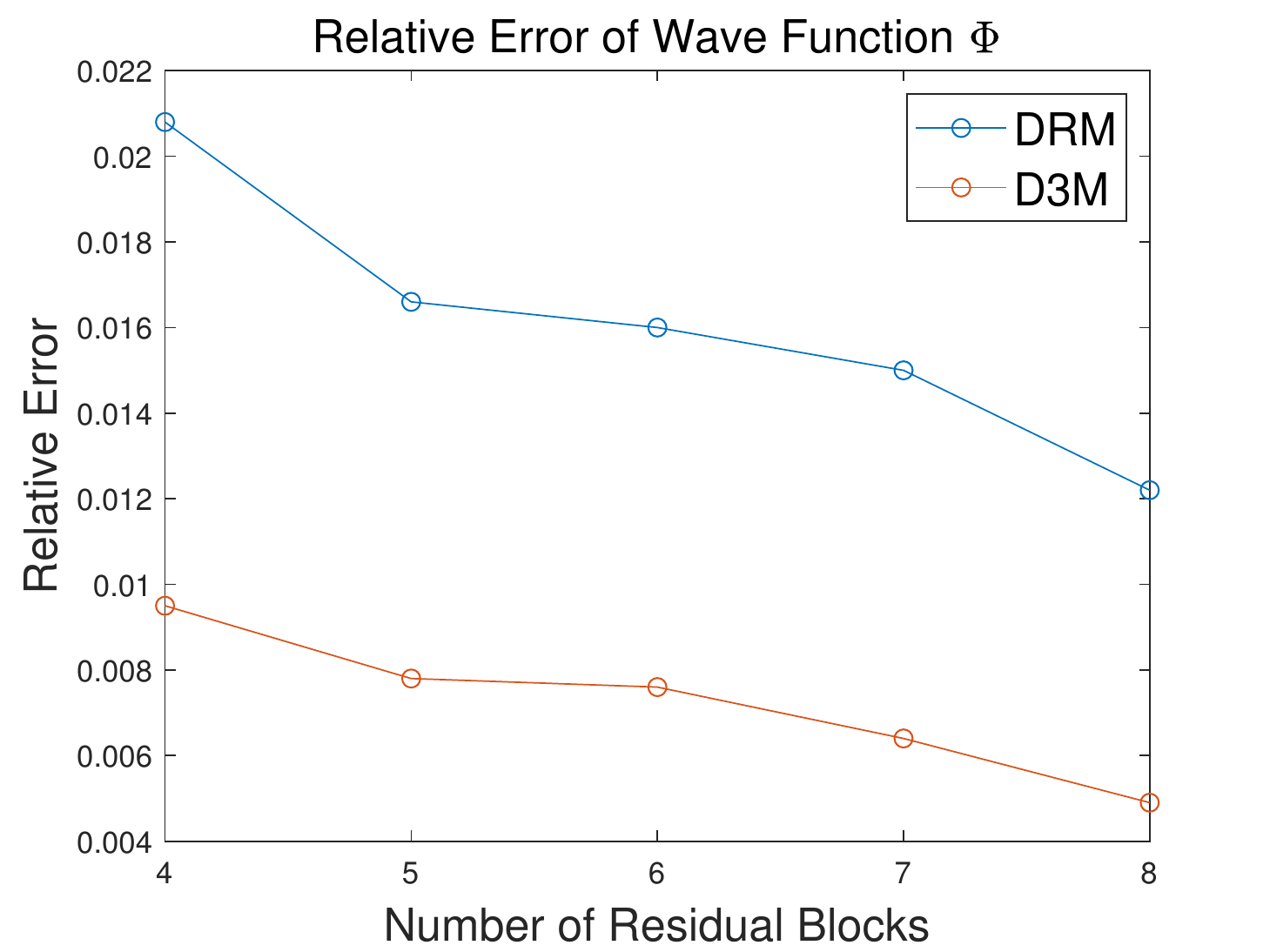}
	}
	\quad
	\subfigure[]{
		\includegraphics[width=0.47\textwidth]{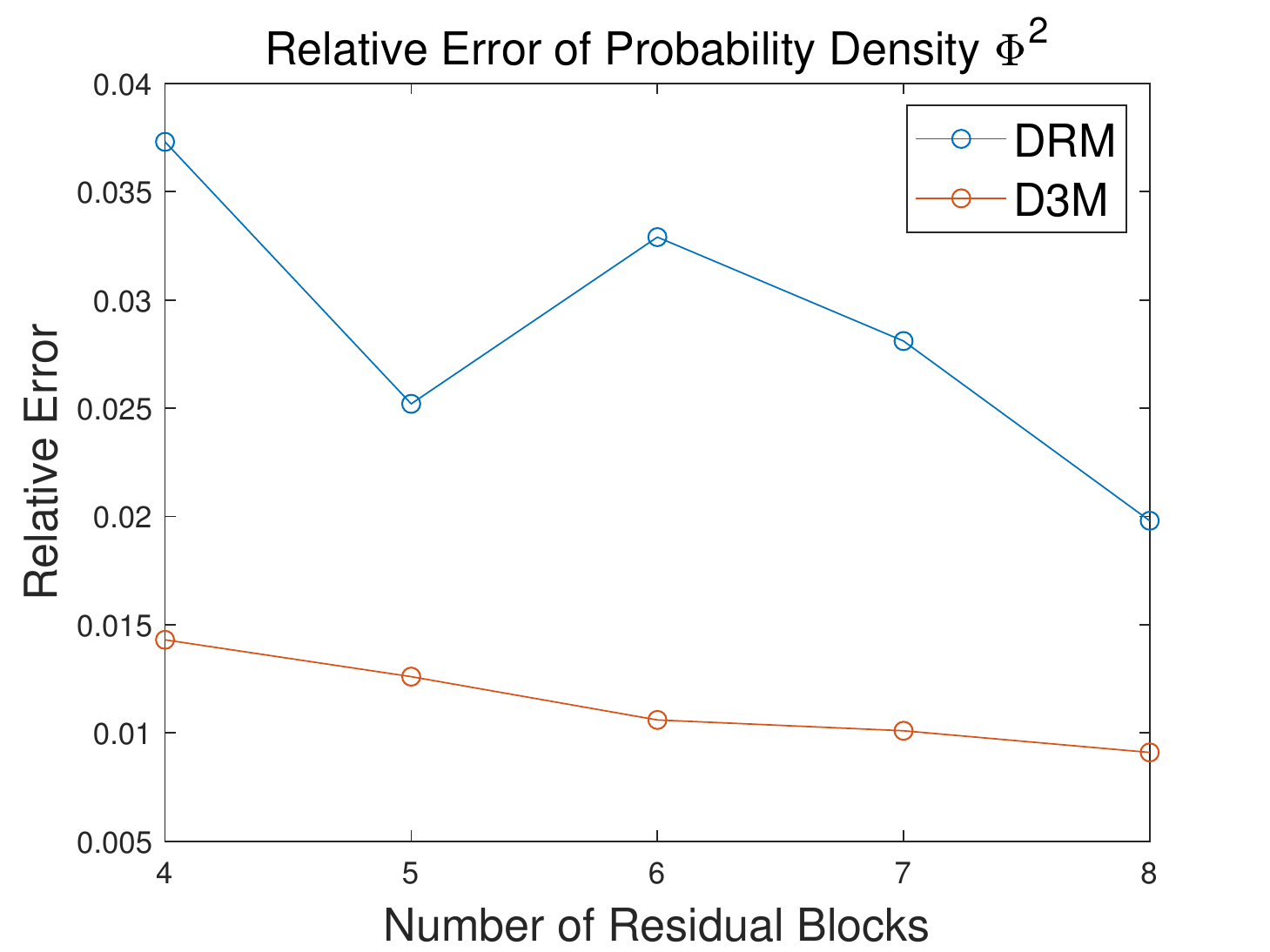}
	}
	\caption{Comparison of relative errors {\ls corresponding to} different number of residual blocks.}
	\label{schro_comp}
\end{figure}

\begin{table}[H]
	\centering
	\begin{threeparttable}
		\caption{Relative errors for the wave function and the probability density.}
		\label{table2}
		\begin{tabular}{cccccc}
			\hline
			Target &Method &Net type &Blocks &Number of neurons &Relative error\\
			\hline
			Wave&DRM&ResNet&4&2048&0.0209\\
			Wave&DRM&ResNet&8&4096&0.0169\\
			Wave&D3M&ResNet&4&2048&0.0095\\
			Wave&D3M&ResNet&8&4096&0.0045\\
			Prob.&DRM&ResNet&4&2048&0.0357\\
			Prob.&DRM&ResNet&8&4096&0.0334\\
			Prob.&D3M&ResNet&4&2048&0.0143\\
			Prob.&D3M&ResNet&8&4096&0.0091\\
			\hline
		\end{tabular}
	\end{threeparttable}
\end{table}

\section{Conclusion}\label{conclusion}
This paper has proposed a new deep domain decomposition method. The most significant contribution of the proposed approach is parallel computation, which lays a foundation for employing physics-constrained deep learning framework in large-scalar engineering simulations or designs. This is accomplished by incorporating domain decomposition method into the loss function. Based on the property of mesh-free, we propose a new D3M sampling method to improve computational efficiency. And our framework absorbs the idea of mixed finite element method, so that the boundary condition can be satisfied more accurately. We have demonstrated that deep domain decomposition method can solve general parabolic PDEs with high accuracy. Furthermore, the generalization performance of D3M should be better, because domains are smaller.

In theory, our approach is feasible to solve complex systems with many subdomains and corresponding neural networks. In practice, the approach suffers from three main bottlenecks. One is that a bad initialization of neural networks can lead to superfluous cost for following iterations. One is the choice of function for approximating interfaces. And another is that the choice of Lagrangian multiplier is important but lacks of prior. We leave these questions for future work.

\section*{Acknowledgments}
This work is supported by the National Natural Science Foundation of China (No. 11601329).


\addcontentsline{toc}{chapter}{Bibliography}
\bibliography{KeLi_d3m}

\end{document}